\newtheorem{thm}{Theorem}
\newtheorem{cor}[thm]{Corollary}
\newtheorem{rmk}[thm]{Remark}
\newtheorem{prp}[thm]{Proposition}
\newcommand{\anti}[2]{A_{#2}({#1})}
\begin{document}

\title{Antithetic and Monte Carlo kernel estimators for partial rankings}

\author{M. Lomeli  \and
             M. Rowland \and
            A. Gretton \and Z. Ghahramani 
}


\institute{
M. Lomeli \at  Computational and Biological Lab, University of Cambridge.  \email{maria.lomeli@eng.cam.ac.uk }
\and
M. Rowland \at Department of Pure Mathematics and Mathematical Statistics, University of Cambridge. \email{mr504@cam.ac.uk}
\and 
A. Gretton \at
Gatsby Computational Neuroscience Unit, University College London. \email{arthur.gretton@gmail.com}
\and
Z. Ghahramani \at
Computational and Biological Lab, University of Cambridge and Uber AI Labs. \email{zoubin@eng.cam.ac.uk}
}

\date{Received: date / Accepted: date}

\maketitle

\begin{abstract}
In the modern age, rankings data is ubiquitous and it is useful for a  variety of applications such as recommender systems, multi-object tracking and preference learning. However, most rankings data encountered in the real world is incomplete, which prevents the direct application of existing modelling tools for complete rankings. Our contribution is a novel way to extend kernel methods for complete rankings to partial rankings, via consistent Monte Carlo estimators for Gram matrices: matrices of kernel values between pairs of observations.
 We also present a novel variance reduction scheme based on an antithetic variate construction between permutations to obtain an improved estimator for the Mallows kernel.  The corresponding antithetic kernel estimator has lower variance and we demonstrate empirically that it has a better performance in a variety of Machine Learning tasks.
 Both kernel estimators are based on extending kernel mean embeddings to the embedding of a set of full rankings consistent with an observed partial ranking. They form a computationally tractable alternative to previous approaches for partial rankings data. 
  An overview of the existing kernels and metrics for permutations is also provided. 
\keywords{Reproducing Kernel Hilbert Space; Partial rankings; Monte Carlo; Antithetic variates; Gram matrix}
\end{abstract}

\section{Motivation}
\label{motiv}
 Permutations play a fundamental role in statistical modelling and machine learning applications involving rankings and preference data. A ranking over a set of objects can be encoded as a permutation, hence, kernels for permutations are useful in a variety of machine learning applications involving rankings. Applications include recommender systems, multi-object tracking and preference learning. It is of interest to construct a kernel in the space of the data in order capture similarities between datapoints and thereby influence the pattern of generalisation. Kernels are used in many machine learning methods. For instance, a kernel input is required for the maximum mean discrepancy (MMD) two sample test~\citep{GreBorMalSchoSmo12}, kernel principal component analysis (kPCA)~\citep{SchSmoMull99}, support vector machines~\citep{BosGuyVap92, CorVap95}, Gaussian processes (GPs)~\citep{RasWil2006a} and agglomerative clustering~\citep{DudHar73}, among others. 
  
Our main contributions are: (i) A novel and computationally tractable way to deal with incomplete or partial rankings by first representing the marginalised kernel \citep{Haussler99} as a kernel mean embedding of a set of full rankings consistent with an observed partial ranking. We then propose two estimators that can be represented as the corresponding empirical mean embeddings: (ii) A Monte Carlo kernel estimator that is based on sampling independent and identically distributed rankings from the set of consistent full rankings given an observed partial ranking; (iii) An antithetic variate construction for the marginalised Mallows kernel that gives a lower variance estimator for the kernel Gram matrix. The Mallows kernel has been shown to be an expressive kernel; in particular, ~\citet{Mania16} show that the Mallows kernel is an example of a universal and characteristic kernel, and hence it is a useful tool to distinguish samples from two different distributions, and it achieves the Bayes risk when used in kernel-based classification/regression~\citep{SriFukLan11}. \citet{JiaoVert15} have proposed a fast approach for computing the Kendall marginalised kernel, however, this kernel is not characteristic~\citep{Mania16}, and hence has limited expressive power.

The resulting estimators are used for a variety of kernel machine learning algorithms in the experiments. We present comparative simulation results demonstrating the efficacy of the proposed estimators for an agglomerative clustering task, a hypothesis test task using the maximum mean discrepancy (MMD)~\citep{GreBorMalSchoSmo12}  and a Gaussian process classification task. For the latter, we extend some of the existing methods in the software library GPy~\citep{gpy2014}.

 Since the space of permutations is an example of a discrete space, with a non-commutative group structure, the corresponding reproducing kernel Hilbert spaces (RKHS) have only recently being investigated; see~\citet{KonHow07},~\citet{FukSriGreScho09},~\citet{KonBar10},~\citet{JiaoVert15} and \citet{Mania16}. We provide an overview of the connection between kernels and  certain semimetrics when working on the space of permutations. This connection allows us to obtain kernels from given semimetrics or semimetrics from existing kernels. We can combine these semimetric-based kernels to obtain novel, more expressive kernels which can be used for the proposed Monte Carlo kernel estimator. 

\section{Definitions}\label{sec:def}

We first briefly introduce the theory of permutation groups. A particular application of permutations is to use them to represent rankings; in fact, there is a natural one-to-one relationship between rankings of $n$ items and permutations. For this reason, we sometimes use ranking and permutation interchangeably. In this section, we state some mathematical definitions to formalise the problem in terms of the space of permutations.

Let $\left[n\right]=\left\{1,2,\hdots,n\right\}$ be a set of indices for $n$ items, for some $n \in \mathbb{N}$.
Given a ranking of these $n$ items, we use the notation $\succ$ to denote the ordering of the items induced by the ranking, so that for distinct $i, j \in \left[ n \right]$, if $i$ is preferred to $j$, we will write $i \succ j$. Note that for a full ranking, the corresponding relation $\succ$ is a total order on $\{1,\ldots, n\}$.

We now outline the correspondence between rankings on $\left[ n \right]$ and the  permutation group $S_n$ that we use throughout the paper. In words, given a full ranking of $[n]$, we will associate it with the permutation $\sigma \in S_n$ that maps each ranking position $1,\ldots,n$ to the correct object under the ranking. More mathematically, given a ranking $a_1 \succ \cdots \succ a_n$ of $\left[ n \right]$, we may associate it with the permutation $\sigma \in S_n$ given by $\sigma(j) = a_j$ for all $j =1,\ldots,n$. For example, the permutation corresponding to the ranking on $[3]$ given by $2\succ 3\succ 1$, corresponds to the permutation $\sigma \in S_3$ given by $\sigma(1)=2,\sigma(2)=3,\sigma(3)=1$.
This correspondence allows the literature relating to kernels on permutations to be leveraged for problems involving the modelling of ranking data.

In the next section, we will review some of the semimetrics on $S_n$ that can serve as building blocks for the construction of more expressive kernels.

\subsection{Metrics for permutations and properties}
\label{defs}

\begin{definition}
Let $\mathcal{X}$ be any set and $d:\mathcal{X}\times \mathcal{X}\rightarrow \mathbb{R}$ is a function, which we write $d(x,y)$ for every $x,y \in \mathcal{X}$. Then $d$ is a \emph{semimetric} if it satisfies the following conditions, for every $x,y\in\mathcal{X}$ \citep{Dud02}:	
	\begin{enumerate}[i)]
		\item $d(x,y)=d(y,x)$, that is, $d$ is a symmetric function.
		\item $d(x,y)=0$ if and only if $x=y$.		
		
		A \emph{semimetric} is a \emph{metric} if it satifies:

		\item $d(x,z)\leq d(x,y)+d(y,z)$ for every $x,y, z \in \mathcal{X}$, that is, $d$ satisfies the triangle inequality.
	\end{enumerate}
\end{definition}

The following  are some examples of semimetrics on the space of permutations $S_n$ \citep{Dia88}. All semimetrics in bold have the additional property of being of negative type. Theorem \ref{chon}, stated below, shows that negative type semimetrics are closely related to kernels.
 
\label{metrics}
\begin{enumerate}[1)]
	\item \emph{Spearman's footrule}. 
	
	$d_1(\sigma,\sigma') = \sum_{i=1}^n |\sigma(i) - \sigma'(i)|= \|\sigma-\sigma'\|_1$.
	\item \emph{\textbf{Spearman's rank correlation.}}
	
	 $\mathbf{d_2}(\sigma,\sigma')= \sum_{i=1}^n (\sigma(i) -\sigma'(i))^2= \|\sigma-\sigma'\|^2_2$. 
	\item \emph{\textbf{Hamming distance}.} 
	
	$d_H(\sigma,\sigma') = \# \{i | \sigma(i) \not= \sigma'(i) \}.$ It can also be defined as the minimum number of substitutions required to change one permutation into the other. 
	\item \emph{Cayley distance.}
	
	 $d_C(\sigma, \sigma') = \sum_{j=1}^{n-1}X_j(\sigma\circ(\sigma')^{-1})$, 
	 
	 where the composition operation of the permutation group $S_n$ is denoted by $\circ$ and $X_j(\sigma\circ (\sigma')^{-1})= 0$ if $j$ is the largest item in its cycle and is equal to 1 otherwise \citep{IruCalLoz16}. It is also equal to the minimum number of pairwise transpositions taking $\sigma$ to $\sigma'$. Finally, it can also be shown to be equal to $n-C(\sigma\circ(\sigma')^{-1})$ where $C(\eta)$ is the number of cycles in $\eta$.
	\item \emph{\textbf{Kendall distance.}} 
	
	$\mathbf{d_{\tau}(\sigma, \sigma')} =  n_d(\sigma,  \sigma')$,
	
	 where $n_d(\sigma, \sigma')$ is the number of discordant pairs for the permutation pair $(\sigma, \sigma')$. It can also be defined as the minimum number of pairwise adjacent transpositions taking $\sigma^{-1}$ to $(\sigma')^{-1}$.
	\item \emph{$l_p$ distances.} $d_p(\sigma, \sigma') = \left(\sum_{i=1}^n |\sigma(i) - \sigma'(i)|^p\right)^{\frac{1}{p}}= \|\sigma-\sigma'\|_p$ with $p\geq 1$.
	\item \emph{$l_{\infty}$ distance.} $d_{\infty}(\sigma, \sigma') =\stackrel{\text{max}}{_{1\leq i \leq n}}|\sigma(i) - \sigma'(i)|= \|\sigma-\sigma'\|_{\infty}$.

\end{enumerate}

\begin{definition} A \emph{semimetric} is said to be of \emph{negative type} if for all $n\geq 2$, $x_1,\hdots,x_n\in \mathcal{X}$ and $\alpha_1,\hdots, \alpha_n\in \mathbb{R}$ with $\sum_{i=1}^n \alpha_i=0$, we have
\begin{align}\label{negtyp}
\sum_{i=1}^n\sum_{j=1}^n\alpha_i\alpha_j d(x_i,x_j)\leq 0.
\end{align}
In general, if we start with a Mercer kernel for permutations, that is, a symmetric and positive definite function $k : S_n \times S_n \rightarrow \mathbb{R}$, the following expression gives a semimetric $d$ that is of negative type	
\begin{align}\label{def:semimetneg}
d(\sigma,\sigma')^2&= k(\sigma,\sigma)+k(\sigma',\sigma')-2k(\sigma,\sigma').
\end{align}
\end{definition} 
A useful characterisation of semimetrics of negative type is given by the following theorem, which states a connection between negative type metrics and a Hilbert space feature representation or feature map $\phi$.

\begin{theorem} {\citep{BerChrRes84}. }\label{chon} A semimetric $d$ is of negative type if and only if there exists a Hilbert space $\mathcal{H}$ and an injective map $\phi:\mathcal{X}\rightarrow \mathcal{H}$ such that $\forall x,x' \in \mathcal{X}$,
	$d(x,x')=\|\phi(x)-\phi(x')\|_{\mathcal{H}}^2$.
\end{theorem}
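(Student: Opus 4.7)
My plan is to prove both implications of this biconditional separately, with the reverse direction being the substantive part.

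For the forward direction, suppose $\phi:\mathcal{X}\to\mathcal{H}$ is an injective map with $d(x,x')=\|\phi(x)-\phi(x')\|_{\mathcal{H}}^2$. Given $x_1,\ldots,x_n\in\mathcal{X}$ and $\alpha_1,\ldots,\alpha_n\in\mathbb{R}$ with $\sum_i\alpha_i=0$, I expand
\[
\sum_{i,j}\alpha_i\alpha_j\|\phi(x_i)-\phi(x_j)\|_{\mathcal{H}}^2 = \sum_{i,j}\alpha_i\alpha_j\bigl(\|\phi(x_i)\|^2+\|\phi(x_j)\|^2-2\langle\phi(x_i),\phi(x_j)\rangle\bigr).
\]
The two single-variable terms vanish because $\sum_i\alpha_i=0$, and the cross term collapses to $-2\bigl\|\sum_i\alpha_i\phi(x_i)\bigr\|_{\mathcal{H}}^2\leq 0$, establishing the negative type inequality.

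For the reverse direction, I fix a reference point $x_0\in\mathcal{X}$ and define a symmetric candidate kernel
\[
k(x,y)=\tfrac{1}{2}\bigl(d(x,x_0)+d(y,x_0)-d(x,y)\bigr).
\]
The key step is showing $k$ is positive semidefinite. Given $y_1,\ldots,y_n\in\mathcal{X}$ and $\beta_1,\ldots,\beta_n\in\mathbb{R}$, I set $y_0:=x_0$, $\alpha_0:=-\sum_{i=1}^n\beta_i$ and $\alpha_i:=\beta_i$ for $i\geq 1$, so that $\sum_{i=0}^n\alpha_i=0$. Applying \eqref{negtyp} to this augmented collection and expanding, the terms involving $d(y_i,y_0)$ and $d(y_0,y_j)$ can be rearranged (using $d(x_0,x_0)=0$) to yield exactly $-2\sum_{i,j=1}^n\beta_i\beta_jk(y_i,y_j)\leq 0$, hence $k$ is positive semidefinite.

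With $k$ in hand, I invoke the standard Moore--Aronszajn construction: let $\mathcal{H}_0=\mathrm{span}\{k(\cdot,x):x\in\mathcal{X}\}$ equipped with the bilinear form $\langle k(\cdot,x),k(\cdot,y)\rangle=k(x,y)$, which is a well-defined inner product by positive semidefiniteness, and let $\mathcal{H}$ be its completion. Setting $\phi(x):=k(\cdot,x)$ and using $d(x,x)=0$, I compute
\[
\|\phi(x)-\phi(y)\|_{\mathcal{H}}^2=k(x,x)+k(y,y)-2k(x,y)=d(x,x_0)+d(y,x_0)-2k(x,y)=d(x,y),
\]
which is the required isometric embedding. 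Injectivity then follows immediately, since $\phi(x)=\phi(y)$ forces $d(x,y)=0$ and hence $x=y$ by property (ii) of a semimetric. The main subtlety is the algebraic rearrangement showing positive semidefiniteness of $k$; everything else is either the standard RKHS completion or routine expansion, so I would present that rearrangement in full detail while citing Moore--Aronszajn for the construction of $\mathcal{H}$.
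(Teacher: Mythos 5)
Your proof is correct, but note that the paper does not actually prove this statement: it is quoted verbatim from Berg, Christensen and Ressel (1984) with no argument supplied, so there is no in-paper proof to compare against. What you have written is a sound self-contained reconstruction of the classical result. The forward direction is the standard expansion in which the terms $\sum_{i,j}\alpha_i\alpha_j\|\phi(x_i)\|^2$ and $\sum_{i,j}\alpha_i\alpha_j\|\phi(x_j)\|^2$ die by $\sum_i\alpha_i=0$ and the cross term collapses to $-2\bigl\|\sum_i\alpha_i\phi(x_i)\bigr\|^2\leq 0$; the reverse direction is the usual augmentation trick (adjoin $y_0=x_0$ with weight $\alpha_0=-\sum_i\beta_i$) showing that $k(x,y)=\tfrac{1}{2}\bigl(d(x,x_0)+d(y,x_0)-d(x,y)\bigr)$ is positive semidefinite, followed by the Moore--Aronszajn embedding. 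It is worth observing that this $k$ is exactly the \emph{distance-induced kernel} the paper lists as Example~7 in Section~\ref{Exampleskern}, so your construction makes explicit the link the paper relies on implicitly. Two small points you correctly handle but should keep visible in a full write-up: the identity $k(x,x)=d(x,x_0)$ uses $d(x,x)=0$, and injectivity of $\phi$ uses property~(ii) of a semimetric ($d(x,y)=0\Rightarrow x=y$) rather than anything about the kernel itself; also, in the Moore--Aronszajn step the bilinear form is a priori only semidefinite, but the reproducing property plus Cauchy--Schwarz shows it is definite on the span of the $k(\cdot,x)$, which is the standard justification and is covered by your citation.
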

Once the feature map from Theorem~\ref{chon} is found, we can directly take its inner product to construct a kernel. For instance, \citet{JiaoVert15} propose an explicit feature representation for Kendall kernel given by
\[
\displaystyle{
\Phi(\sigma)=\left(\frac{1}{\sqrt{\binom{n}{2}}}\left[\mathbb{I}_{\left\{\sigma(i)>\sigma(j)\right\}}-\mathbb{I}_{\left\{\sigma(i)<\sigma(j)\right\}}\right]\right)_{1\leq i<j\leq n}
} \, .\] 

They show that the inner product between two such features is a positive definite kernel. The corresponding metric, given by Kendall distance, can be shown to be the square of the norm of the difference of feature vectors. Hence, by Theorem~\ref{chon}, it is of negative type.

Analogously, \citet{Mania16} propose an explicit feature representation for the Mallows kernel, given by

\noindent$\displaystyle{
\Phi(\sigma)=\left(\frac{1-\exp{(-v)}}{2}\right)^{\frac{1}{2}\binom{n}{2}}\left(\frac{1-\exp{(-v)}}{1+\exp{(-v)}}\right)^{\frac{r}{2}}\prod_{i=1}^r\bar{\Phi}(\sigma)_{s_i}
}$ 
\noindent where $\bar{\Phi}(\sigma)_{s_i}=2\mathbb{I}_{\left\{\sigma(a_i)<\sigma(b_i)\right\}-1}$ when $s_i=(a_i,b_i)$ and $\bar{\Phi}(\sigma)_{\emptyset}=2^{\frac{1}{2}\binom{n}{2}}(1+\exp{(-v)})^{\frac{1}{2}\binom{n}{2}}$.

In the following proposition, an explicit feature representation for the Hamming distance is introduced and we show that it is a distance of negative type.
\begin{proposition}\label{hamm} The Hamming distance is of negative type with 
\begin{align}
d_H(\sigma, \sigma') &=\frac{1}{2} \text{Trace}\left[\left(\Phi(\sigma)-\Phi(\sigma')\right)\left(\Phi(\sigma)-\Phi(\sigma')\right)^T\right]
\end{align} where the corresponding feature representation is a matrix given by 
\[ \Phi(\sigma)=\left( \begin{array}{ccc}
	\mathbb{I}_{\left\{\sigma(1)=1\right\}}&\hdots&\mathbb{I}_{\left\{\sigma(n)=1\right\}}  \\
	\mathbb{I}_{\left\{\sigma(1)=2\right\}}&\hdots&\mathbb{I}_{\left\{\sigma(n)=2\right\}} \\
	\vdots&\hdots&\vdots\\
	\mathbb{I}_{\left\{\sigma(1)=n\right\}}&\hdots&\mathbb{I}_{\left\{\sigma(n)=n\right\}} \end{array} \right).\]
\end{proposition}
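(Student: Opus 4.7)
The plan is to recognise that the proposed feature map $\Phi(\sigma)$ is precisely the $n\times n$ permutation matrix associated with $\sigma$, namely the $(0,1)$-matrix with $\Phi(\sigma)_{ij}=\mathbb{I}_{\{\sigma(j)=i\}}$. Once this is noted, the right-hand side of the displayed identity is half the squared Frobenius norm $\tfrac12\|\Phi(\sigma)-\Phi(\sigma')\|_F^2$, since $\mathrm{Trace}(A A^T)=\sum_{i,j}A_{ij}^2$ for any real matrix $A$. The Frobenius inner product turns the space of $n\times n$ real matrices into a finite-dimensional Hilbert space, so once the identity is established the conclusion will follow immediately from Theorem~\ref{chon} applied to the rescaled map $\phi(\sigma) := \Phi(\sigma)/\sqrt{2}$, which is injective because distinct permutations produce distinct permutation matrices.

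The core step is therefore to verify the identity by a column-by-column expansion. Writing
\[
\tfrac12\|\Phi(\sigma)-\Phi(\sigma')\|_F^2 = \tfrac12\sum_{j=1}^n\sum_{i=1}^n\bigl(\mathbb{I}_{\{\sigma(j)=i\}}-\mathbb{I}_{\{\sigma'(j)=i\}}\bigr)^2,
\]
I would analyse the inner sum over $i$ for each fixed $j$. If $\sigma(j)=\sigma'(j)$, the two indicator vectors agree in every coordinate and the contribution is $0$. If $\sigma(j)\neq\sigma'(j)$, the two indicator vectors are distinct standard basis vectors in $\mathbb{R}^n$, so the squared difference equals $1$ at $i=\sigma(j)$, equals $1$ at $i=\sigma'(j)$, and vanishes elsewhere; the contribution is $2$. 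Summing over $j$, the total equals $2\,\#\{j:\sigma(j)\neq\sigma'(j)\}=2\,d_H(\sigma,\sigma')$, and the factor of $\tfrac12$ then gives the claimed equality.

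To close the argument, I would invoke Theorem~\ref{chon} with $\mathcal{H}$ equal to the Hilbert space of $n\times n$ real matrices equipped with the Frobenius inner product, and feature map $\phi(\sigma)=\Phi(\sigma)/\sqrt{2}$. Injectivity of $\phi$ is clear since $\Phi(\sigma)$ records the graph of $\sigma$, and by construction $d_H(\sigma,\sigma')=\|\phi(\sigma)-\phi(\sigma')\|_{\mathcal{H}}^2$, so $d_H$ is of negative type.

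There is no real obstacle: the only mildly delicate point is keeping the row/column indexing of $\Phi(\sigma)$ consistent (the convention $\Phi(\sigma)_{ij}=\mathbb{I}_{\{\sigma(j)=i\}}$ matches the display in the statement, so that the $j$-th column of $\Phi(\sigma)$ is the standard basis vector $e_{\sigma(j)}$), which is exactly the structure that makes the column-wise case analysis trivial.
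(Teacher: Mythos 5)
Your proof is correct and takes essentially the same route as the paper: both identify $\Phi(\sigma)$ with the permutation matrix, show $d_H(\sigma,\sigma')=\tfrac12\|\Phi(\sigma)-\Phi(\sigma')\|_F^2$ by expanding the squared indicator differences, and conclude via Theorem~\ref{chon}. Your version is in fact slightly more careful than the paper's, since you make explicit the rescaling $\phi=\Phi/\sqrt{2}$ needed to match the exact form $d(x,x')=\|\phi(x)-\phi(x')\|_{\mathcal{H}}^2$ required by the theorem, and you verify injectivity of the feature map.
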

\begin{proof}{The Hamming distance can be written as a square difference of indicator functions in the following way}
	\begin{align}
	d_H(\sigma, \sigma') &= \# \{i | \sigma(i) \not= \sigma'(i) \}\nonumber\\
	&=\frac{1}{2} \sum_{i=1}^n\sum_{\ell=1}^n\biggl(\mathbb{I}_{\left\{\sigma(i)=\ell\right\}}-\mathbb{I}_{\left\{\sigma'(i)=\ell\right\}}\biggr)^2\nonumber \\
	\intertext{where each indicator is one whenever the given entry of the permutation is equal to the corresponding element of the identity element of the group. Let the $\ell$-th feature vector be $\phi_{\ell}(\sigma)=\left(\mathbb{I}_{\left\{\sigma(1)=\ell\right\}},\hdots,\mathbb{I}_{\left\{\sigma(n)=\ell\right\}}\right)$, then}
	&=\frac{1}{2} \sum_{\ell=1}^n(\phi_{\ell}(\sigma)-\phi_{\ell}(\sigma'))^T(\phi_{\ell}(\sigma)-\phi_{\ell}(\sigma'))\nonumber\\
	&= \frac{1}{2} \sum_{\ell=1}^n\|\phi_{\ell}(\sigma)-\phi_{\ell}(\sigma')\|^2\nonumber\\
	&=\frac{1}{2} \text{Trace}\left[\left(\Phi(\sigma)-\Phi(\sigma')\right)\left(\Phi(\sigma)-\Phi(\sigma')\right)^T\right]\nonumber.
	\end{align}
	This is the trace of  the difference of the product of the feature matrices $\Phi(\sigma)-\Phi(\sigma')$, where the  difference of feature matrices is given by
	\[ \left( \begin{array}{ccc}
	\mathbb{I}_{\left\{\sigma(1)=1\right\}}-\mathbb{I}_{\left\{\sigma'(1)=1\right\}}&\hdots&\mathbb{I}_{\left\{\sigma(n)=1\right\}}-\mathbb{I}_{\left\{\sigma'(n)=1\right\}}  \\
	\mathbb{I}_{\left\{\sigma(1)=2\right\}}-\mathbb{I}_{\left\{\sigma'(1)=2\right\}}&\hdots&\mathbb{I}_{\left\{\sigma(n)=2\right\}}-\mathbb{I}_{\left\{\sigma'(n)=2\right\}} \\
	\vdots&\vdots&\vdots\\
	\mathbb{I}_{\left\{\sigma(1)=n\right\}}-\mathbb{I}_{\left\{\sigma'(1)=n\right\}}&\hdots&\mathbb{I}_{\left\{\sigma(n)=n\right\}}-\mathbb{I}_{\left\{\sigma'(n)=n\right\}}   \end{array} \right).\]
	This is the square of the usual Frobenius norm for matrices, so by Theorem~\ref{chon}, the Hamming distance is of negative type.

\end{proof}

 Another example is Spearman's rank correlation, which is a semimetric of negative type since it is the square of the usual Euclidean distance \citep{BerChrRes84}.

The two alternative definitions given for some of the distances in the previous examples are handy from different perspectives. One is an expression in terms of either an injective or non-injective feature representation, while the other is in terms of the minimum number of operations to change one permutation to the other.  Other distances can be defined in terms of this minimum number of operations, they are called \emph{editing metrics} \citep{DezDez09}. Editing metrics are useful from an algorithmic point of view whereas metrics defined in terms of feature vectors are useful from a theoretical point of view. Ideally, having a particular metric in terms of both algorithmic and theoretical descriptions gives a better picture of which are the relevant characteristics of the permutation that the metric takes into account. For instance, Kendall and Cayley distances algorithmic descriptions correspond to the bubble and quick sort algorithms respectively \citep{Knu98}. 	

\begin{figure}[h]	
\includegraphics[scale=0.4]{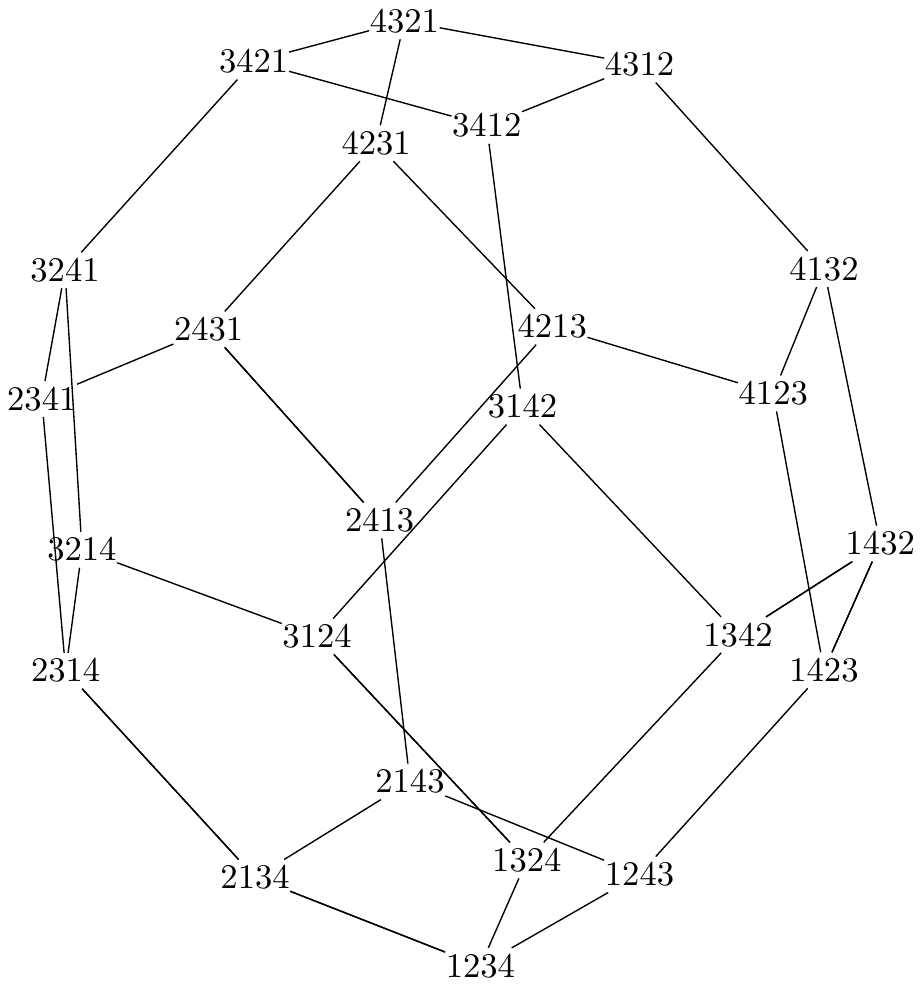}
\includegraphics[scale=0.4]{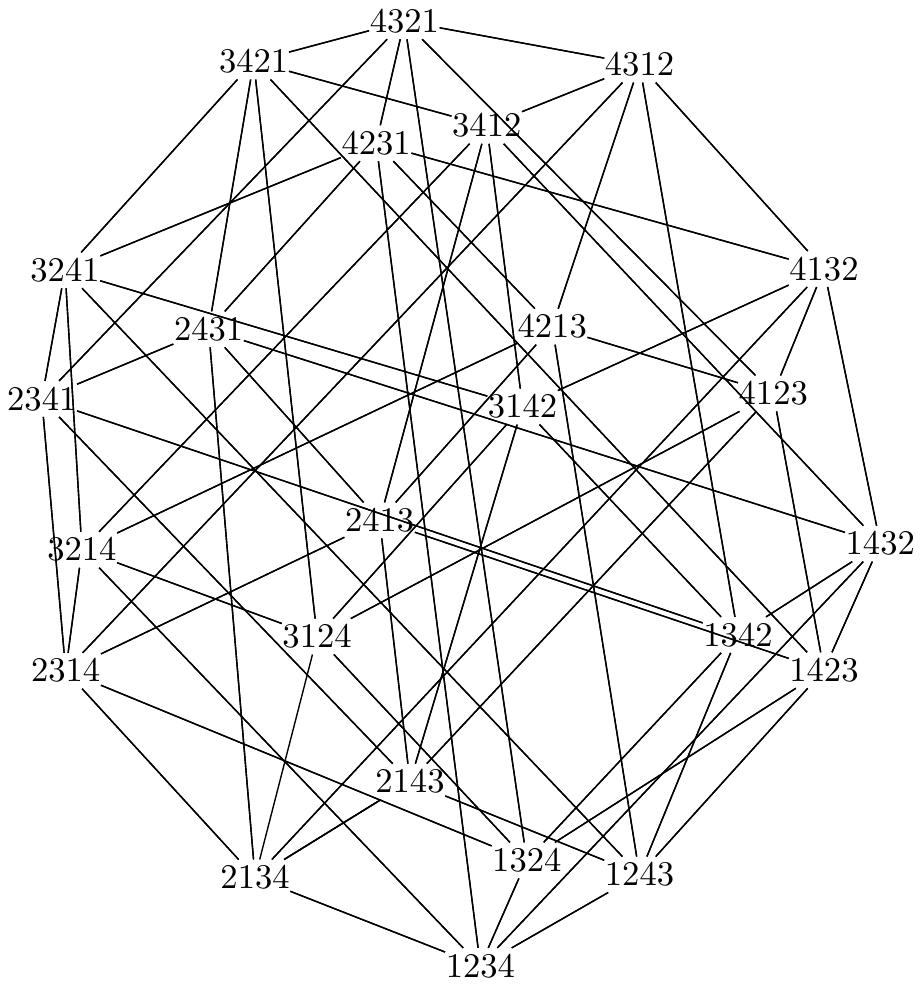}\caption{Kendall and Cayley distances for permutations of $n=4$. There is an edge between two permutations in the graph if they differ by one adjacent or non-adjacent transposition, respectively.}
\end{figure}
Another property shared by most of the semimetrics in the examples is the following 

\begin{definition} Let $\sigma_1,\sigma_2\in S_n$, $(S_n,\circ)$ denote the symmetric group of degree n with the composition operation, a \emph{right-invariant} semimetric  \citep{Dia88} satisfies
\begin{align}\label{rightindef}
d(\sigma_1,\sigma_2)&= d(\sigma_1\circ\eta,\sigma_2\circ \eta) \hspace{2mm}\forall  \ \sigma_1, \sigma_2, \eta \in S_n.
\end{align}
In particular, if we take $\eta=\sigma_1^{-1}$ then $d(\sigma_1,\sigma_2)=d(e,\sigma_2\circ \sigma_1^{-1})$, where $e$ corresponds to the identity element of the permutation group.
\end{definition}
This property is inherited by the \emph{distance-induced kernel} from Section~\ref{Exampleskern}, Example~7. This symmetry is analogous to translation invariance for kernels defined in Euclidean spaces.
Intuitively, if we start with one of the basic semimetrics, we could combine one or more to capture more diverse characteristics. In the next sections, we describe how, if we exploit the relationship between semimetrics and kernels and we combine kernels due to the available operations that still render a kernel, we can produce more expressive kernels. 
\subsection{Kernels for $S_n$\label{Exampleskern}}

If we specify a symmetric and positive definite function or kernel $k$, it corresponds to defining an implicit feature space representation of a ranking data point. The well-known \textit{kernel trick} exploits the implicit nature of this representation by performing computations with the kernel function explicitly, rather than using inner products between feature vectors in high or even  infinite dimensional space. Any symmetric and positive definite function uniquely defines an underlying Reproducing Kernel Hilbert Space (RKHS), see the supplementary material Appendix~\ref{app:rkhs} for a brief overview about the RKHS.
Some examples of kernels for permutations are the following
\begin{table}
\
\end{table}
\begin{enumerate}
\item The \emph{Kendall kernel} \citep{JiaoVert15} is given by

 $\displaystyle{k_\tau(\sigma, \sigma^\prime) = \frac{n_c(\sigma, \sigma^\prime) - n_d(\sigma, \sigma^\prime)}{\binom{d}{2}}}$,\\
 \noindent
 where $n_c(\sigma, \sigma^\prime)$ and $ n_d(\sigma, \sigma^\prime)$ denote the number of concordant and discordant pairs between $\sigma$ and $\sigma^\prime$ respectively.
\item  The \emph{Mallows kernel} \citep{JiaoVert15} is given by

 $	\displaystyle{k^\nu(\sigma, \sigma^\prime) = \exp(-\nu n_d(\sigma, \sigma^\prime))}$.
\item The \emph{Polynomial kernel of degree m} \citep{Mania16}, is given by

 $\displaystyle{ k_{P}^{(m)}(\sigma, \sigma^\prime) = (1 + k_{\tau}(\sigma, \sigma^\prime))^m}$.
\item The \emph{Hamming kernel} is given by

$\displaystyle{k_H(\sigma, \sigma^\prime) =  \text{Trace}\left[\left(\Phi(\sigma)\Phi(\sigma'\right)^T\right]}$.
\item An \emph{exponential semimetric kernel} is given by

$\displaystyle{k_d(\sigma, \sigma^\prime) =  \exp\left\{-\nu d(\sigma, \sigma^\prime)\right\}}$, where $d$ is a semimetric of negative type.
\item The \emph{diffusion kernel} \citep{KonBar10} is given by

$\displaystyle{k_{\beta}(\sigma, \sigma^\prime) =  \exp\left\{\beta q(\sigma \circ\sigma^\prime)\right\}}$, where $\beta\in\mathbb{R}$ and $q$ is a function that must satisfy $q(\pi)=q(\pi^{-1})$ and $\sum_{\pi}q(\pi)=0$. A particular case is $q(\sigma,\sigma')=1$ if $\sigma$ and $\sigma'$ are connected by an edge in some Cayley graph representation of $S_n$, and  $q(\sigma,\sigma')=-\text{degree}_{\sigma}$ if $\sigma=\sigma'$ or $q(\sigma,\sigma')=0$ otherwise.
\item The \emph{semimetric or distance induced kernel} \citep{SejSriGreFuk13}, if the semimetric $d$ is of negative type, then, a family of  kernels $k$, parameterised by a central permutation $\sigma_0$, is given by

$\displaystyle k(\sigma,\sigma')= \frac{1}{2}\left[d(\sigma,\sigma_0)+d(\sigma',\sigma_0)-d(\sigma,\sigma')\right]$.
\end{enumerate}	
If we choose any of the above kernels by itself, it will generally not be complex enough to represent the ranking data's generating mechanism. However, we can benefit from the allowable operations for kernels to combine kernels and still obtain a valid kernel. Some of the operations which render a valid kernel are the following: sum, multiplication by a positive constant, product, polynomial and exponential  \citep{BerAgn04}.

In the case of the symmetric group of degree $n$, $S_n$, there exist kernels that are \emph{right invariant}, as defined in Equation~\eqref{rightindef}. This invariance property is useful because it is possible to write down the kernel as a function of a single argument and then obtain a Fourier representation. The caveat is that this Fourier representation is given in terms of certain matrix unitary representations due to the non-Abelian structure of the group \citep{Jam78}. Even though the space is finite, and every irreducible representation is finite-dimensional \citep{FukSriGreScho09}, these Fourier representations do not have closed form expressions. For this reason, it is difficult to work on the spectral domain as opposed to the $\mathbb{R}^n$ case. There  is also no natural measure to sample from such as the one provided by Bochner's theorem in Euclidean spaces \citep{Wen05}. In the next section, we will present a novel Monte Carlo kernel estimator for the case of partial rankings data.

\section{Partial rankings}

Having provided an overview of kernels for permutations, and reviewed the link between permutations and rankings of objects, we now turn to the practical issue that in real datasets, we typically have access only to partial ranking information, such as pairwise preferences and top-$k$ rankings. Following \cite{JiaoVert15}, we consider the following types of partial rankings:

\begin{definition}[Exhaustive partial rankings, top-$k$ rankings]
	Let $n \in \mathbb{N}$. A partial ranking on the set $[n]$ is specified by an ordered collection $\Omega_1 \succ \cdots \succ \Omega_l$ of disjoint non-empty subsets $\Omega_1,\ldots,\Omega_l \subseteq [n]$, for any $1 \leq l \leq n$. The partial ranking $\Omega_1 \succ \cdots \succ \Omega_l$ encodes the fact that the items in $\Omega_i$ are preferred to those in $\Omega_{i+1}$, for $i=1,\ldots,l-1$, with no preference information specified about the items in $[n] \setminus \cup_{i=1}^l \Omega_i$. A partial ranking $\Omega_1 \succ \cdots \succ \Omega_l$ with $\cup_{i=1}^l \Omega_i = [n]$ termed \emph{exhaustive}, as all items in $[n]$ are included within the preference information. A top-$k$ partial ranking is a particular type of exhaustive ranking $\Omega_1 \succ \cdots \succ \Omega_{l}$, with $|\Omega_1| = \cdots = |\Omega_{l-1}| = 1$, and $\Omega_{l} = [n] \setminus \cup_{i=1}^{l-1} \Omega_i$. We will frequently identify a partial ranking $\Omega_1 \succ \cdots \succ \Omega_l$ with the set $R(\Omega_1,\ldots,\Omega_l) \subseteq S_n$ of full rankings consistent with the partial ranking. Thus, $\sigma \in R(\Omega_1,\ldots,\Omega_l)$ iff for all $1\leq i< j \leq l$, and for all $x \in \Omega_i, y \in \Omega_j$, we have $\sigma^{-1}(x) < \sigma^{-1}(y)$. When there is potential for confusion, we will use the term ``subset partial ranking" when referring to a partial ranking as a subset of $S_n$, and ``preference partial ranking" when referring to a partial ranking with the notation $\Omega_1 \succ \cdots \succ \Omega_l$.
\end{definition}

Thus, for many practical problems, we require definitions of kernels between subsets of partial rankings rather than between full rankings, to be able to deal with datasets containing only partial ranking information. A common approach \citep{TsuKinAsa02} is to take a kernel $K$ defined on $S_n$, and use the \emph{marginalised kernel}, defined on subsets of partial rankings by
\begin{align}\label{convoker}
K(R,R')& =\sum_{\sigma \in R}\sum_{\sigma^\prime \in R^\prime} K(\sigma, \sigma^\prime)p(\sigma|R)p(\sigma^\prime|R^\prime) \,
\end{align}
for all $R, R^\prime \subseteq S_n$, for some probability distribution $p \in \mathscr{P}(S_n)$. Here, $p(\cdot |R)$ denotes the conditioning of $p$ to the set $R \subseteq S_n$. \citet{JiaoVert15} use the \emph{convolution kernel} \citep{Haussler99} between partial rankings, 

given by
\begin{align}\label{convo}
K(R, R^\prime) &= \frac{1}{|R| |R^\prime|} \sum_{\sigma \in R}\sum_{\sigma^\prime \in R^\prime} K(\sigma, \sigma^\prime).
\end{align}
This is a particular case for the marginalised kernel of Equation~\eqref{convoker}, in which we take the probability mass function to be uniform over $R,R'$ respectively. In general, computation with a marginalised kernel quickly becomes computationally intractable, with the number of terms in the right-hand side of Equation \eqref{convoker} growing super-exponentially with $n$, for a fixed number of items in the partial rankings $R$ and $R^\prime$, see Appendix~\ref{bignumbers} for a numerical example of such growth.
An exception is the Kendall kernel case for two interleaving partial rankings of $k$ and $m$ items or a top-$k$ and top-$m$ ranking. In this case, the sum can be tractably computed and it can be done in $\mathcal{O}(k \log k + m \log m)$ time  \citep{JiaoVert15}.

We propose a variety of Monte Carlo methods to estimate the marginalised kernel of Equation~\eqref{convoker} for the general case, where direct calculation is intractable. 

\begin{definition}
The Monte Carlo estimator approximating the marginalised kernel of Equation~\eqref{convoker} is defined for a collection of partial rankings $(R_i)_{i=1}^I$, given by
\begin{align}\label{monteker}
\widehat{K}(R_i, R_j) & = \frac{1}{M_i M_j}\sum_{l=1}^{M_i} \sum_{m=1}^{M_j} w^{(i)}_l w^{(j)}_m K(\sigma^{(i)}_l, \sigma^{(j)}_m)
\end{align}
for $i,j = 1,\ldots,I$, where $((\sigma^{(i)}_n)_{m=1}^{M_i})_{i=1}^I$ are random permutations, and $\left((w^{(i)}_m)_{m=1}^{M_i}\right)_{i=1}^I$ are random weights. Note that this general set-up allows for several possibilities:
\begin{itemize}
	\item For each $i=1\ldots,I$, the permutations $(\sigma^{(i)}_m)_{m=1}^{M_i}$ are drawn exactly from the distribution $p(\cdot|R_i)$. In this case, the weights are simply $w^{(i)}_n = 1$ for $m=1,\ldots,M_i$.
	\item For each $i=1,\ldots,I$, the permutations $(\sigma^{(i)}_m)_{m=1}^{M_i}$ drawn from some proposal distribution $q(\cdot|R_i)$  with the weights given by the corresponding \emph{importance weights} $w^{(i)}_n = p(\sigma^{(i)}_n|R) / q(\sigma^{(i)}_n|R)$ for $m=1,\ldots,M_i$.

\end{itemize}
\end{definition}
An alternative perspective on the estimator defined in Equation \eqref{monteker}, more in line with the literature on random feature approximations of kernels, is to define a random feature embedding for each of the partial rankings $(R_i)_{i=1}^I$.

More precisely, let $\mathcal{H}_K$ be the (finite-dimensional) Hilbert space associated with the kernel $K$ on the space $S_n$, and let $\boldsymbol{\Phi}$ be the associated feature map, so that $\Phi(\sigma) = K(\sigma, \cdot) \in \mathcal{H}_K$ for each $\sigma \in S_n$. Then observe that we have $K(\sigma, \sigma^\prime) = \langle \boldsymbol{\Phi}(\sigma), \boldsymbol{\Phi}(\sigma^\prime) \rangle$ for all $\sigma, \sigma^\prime \in S_n$. We now extend this feature embedding to partial rankings as follows. Given a partial ranking $R \subseteq S_n$, we define the feature embedding of $R$ by
\[
\boldsymbol{\Phi}(R) = \frac{1}{|R|} \sum_{\sigma \in R} K(\sigma, \cdot) \in \mathcal{H}_K
\]
With this extension of $\boldsymbol{\Phi}$ to partial rankings, we may now directly express the marginalised kernel of Equation \eqref{convoker} as an inner product in the same Hilbert space $\mathcal{H}_K$:
\[
K(R, R^\prime) = \langle \boldsymbol{\Phi}(R), \boldsymbol{\Phi}(R^\prime) \rangle
\]
for all partial rankings $R, R^\prime \subseteq S_n$. If we define a random feature embedding of the partial rankings $(R_i)_{i=1}^I$ by
\[
\widehat{\boldsymbol{\Phi}}(R_i) = \sum_{m=1}^{M_i} w^{(i)}_m \boldsymbol{\Phi}(\sigma^{(i)}_m)
\]
then the Monte Carlo kernel estimator of Equation \eqref{monteker} can be expressed directly as
\begin{align}
\widehat{K}(R_i, R_j) & = \frac{1}{M_i M_j}\sum_{l=1}^{M_i} \sum_{m=1}^{M_j} w^{(i)}_l w^{(j)}_m K(\sigma^{(i)}_l, \sigma^{(j)}_m) \nonumber \\
& = \frac{1}{M_i M_j}\sum_{l=1}^{M_i} \sum_{m=1}^{M_j} w^{(i)}_l w^{(j)}_m \langle \boldsymbol{\Phi}(\sigma_l^{(i)}), \boldsymbol{\Phi}(\sigma_m^{(j)}) \rangle \nonumber\\
& = \left\langle \frac{1}{M_i}\sum_{l=1}^{M_i} w_l^{(i)} \boldsymbol{\Phi}(\sigma_l^{(i)}), \frac{1}{M_j}\sum_{m=1}^{M_j} w_m^{(j)} \boldsymbol{\Phi}(\sigma_m^{(j)}) \right\rangle \nonumber \\
& = \langle \widehat{\boldsymbol{\Phi}}(R_i) , \widehat{\boldsymbol{\Phi}}(R_j)\rangle \label{eq:mckernelasinnerprod}
\end{align}
for each $i,j \in \{1,\ldots, I\}$. This expression of the estimator as an inner product between randomised embeddings will be useful in the sequel.

We provide an illustration of the various RKHS embeddings at play in Figure \ref{fig:embeddings}, using the notation of the proof of Theorem \ref{thm:kernel-and-unbiased}. In this figure, $\eta$ is a partial ranking, with three consistent full rankings $\sigma_1, \sigma_2, \sigma_3$. The extended embedding $\widetilde{\boldsymbol{\Phi}}$ applied to $\eta$ is the barycentre in the RKHS of the embeddings of the consistent full rankings, and a Monte Carlo approximation $\widehat{\boldsymbol{\Phi}}$ to this embedding is also displayed.

\begin{figure}[h]
	\centering
	\includegraphics[keepaspectratio, width=0.5\textwidth]{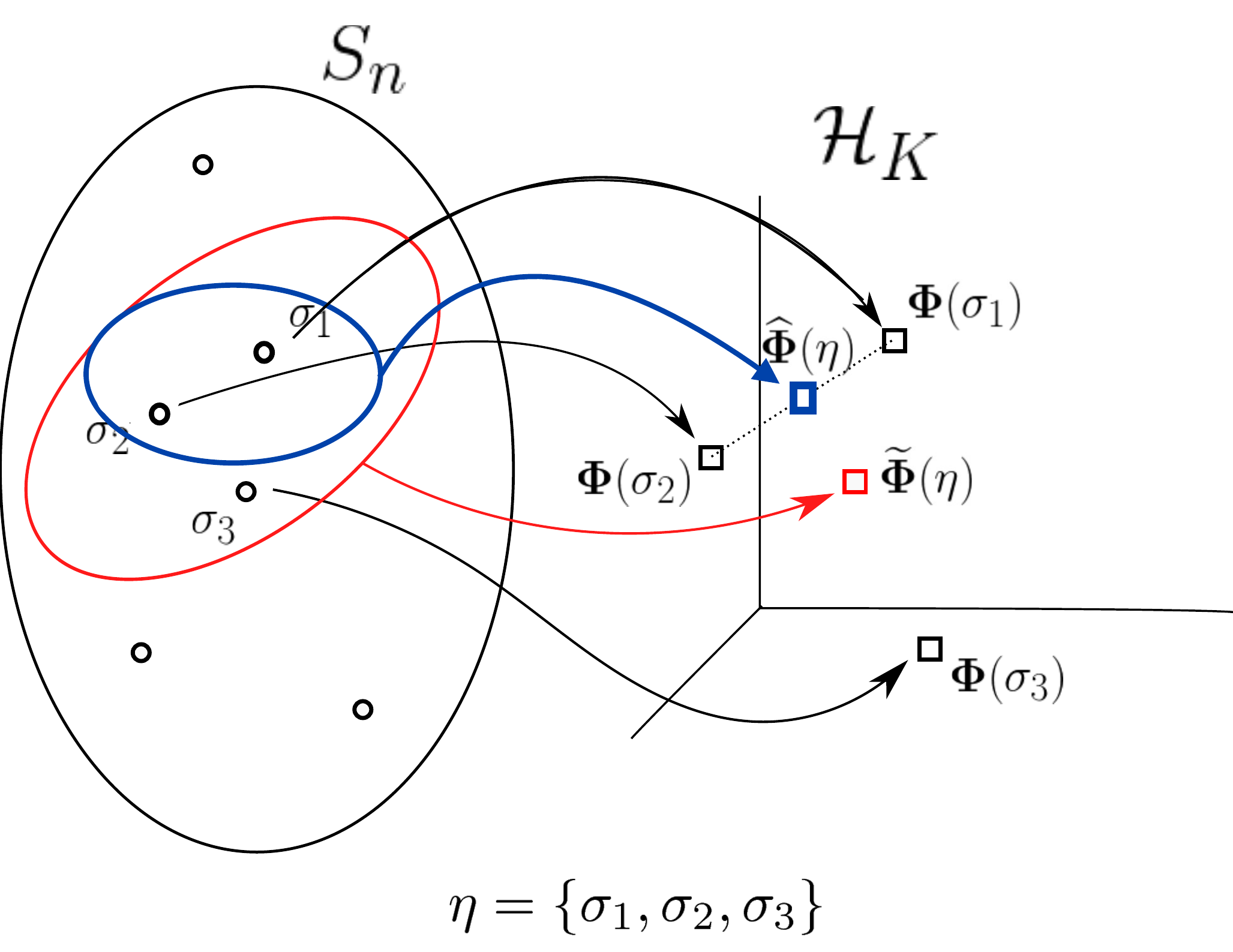}
	\caption{Visualisation of the various embeddings discussed in the proof of Theorem \ref{thm:kernel-and-unbiased}. $\sigma_1, \sigma_2, \sigma_3$ are permutations in $S_n$, which are mapped into the RKHS $\mathcal{H}_K$ by the embedding $\boldsymbol{\Phi}$. $\eta$ is a partial ranking subset  which contains $\sigma_1, \sigma_2, \sigma_3$, and its embedding $\widetilde{\boldsymbol{\Phi}}(\eta)$ is given as the average of the embeddings of its full rankings. The Monte Carlo embedding $\widehat{\boldsymbol{\Phi}}(\eta)$ induced by Equation \eqref{monteker} is computed by taking the average of a randomly sampled collection of consistent full rankings from $\eta$.}

	\label{fig:embeddings}
\end{figure}

\begin{theorem}\label{thm:embeddingtheorem} Let $R_i\subseteq S_n$ be a partial ranking, and let $\left(\sigma_m^{(i)}\right)_{m=1}^{M_i}$ independent and identically distributed samples from $p(\cdot \mid R_i)$. The kernel Monte Carlo mean embedding,
\[
\widehat{\Phi}(R_i)=\frac{1}{M_i}\sum_{m=1}^{M_i} K(\sigma_m^{(i)},\cdot)
\]
is a consistent estimator of the marginalised kernel embedding
\[
\widetilde{\Phi}(R_i) = \frac{1}{|R_i|}\sum_{\sigma \in R_i} K(\sigma, \cdot) \, .
\]
\end{theorem}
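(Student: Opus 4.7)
The plan is to recognise the estimator $\widehat{\Phi}(R_i)$ as a sample mean of i.i.d. random elements in the RKHS $\mathcal{H}_K$, and then to invoke a law of large numbers in this Hilbert space. Concretely, writing $Y_m = K(\sigma_m^{(i)},\cdot) = \boldsymbol{\Phi}(\sigma_m^{(i)}) \in \mathcal{H}_K$, the $Y_m$ are i.i.d. because the $\sigma_m^{(i)}$ are. Hence $\widehat{\Phi}(R_i) = \frac{1}{M_i}\sum_{m=1}^{M_i} Y_m$ is the empirical mean of i.i.d.\ $\mathcal{H}_K$-valued random variables, which is exactly the object to which Hilbert-space limit theorems apply.

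Next I would verify that the expected value of a single $Y_m$ is $\widetilde{\Phi}(R_i)$. Since $p(\cdot\mid R_i)$ is the uniform law on $R_i$ (this is the distribution implicit in the marginalised/convolution kernel of Equation \eqref{convo}), we have
\begin{align*}
\E[Y_m] \;=\; \sum_{\sigma\in R_i} p(\sigma\mid R_i)\, K(\sigma,\cdot) \;=\; \frac{1}{|R_i|}\sum_{\sigma\in R_i} K(\sigma,\cdot) \;=\; \widetilde{\Phi}(R_i),
\end{align*}
where the exchange of expectation and the RKHS evaluation is legitimate because $R_i$ is a finite set and each $K(\sigma,\cdot)$ lies in $\mathcal{H}_K$. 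So $\widehat{\Phi}(R_i)$ is in fact an unbiased estimator of $\widetilde{\Phi}(R_i)$, and consistency reduces to showing concentration of the empirical mean around its expectation.

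For the concentration step, I would bound the mean squared error in the RKHS norm directly. Using independence of the $Y_m$ and unbiasedness,
\begin{align*}
\E\bigl\|\widehat{\Phi}(R_i)-\widetilde{\Phi}(R_i)\bigr\|_{\mathcal{H}_K}^{2}
\;=\; \frac{1}{M_i}\,\E\bigl\|Y_1 - \widetilde{\Phi}(R_i)\bigr\|_{\mathcal{H}_K}^{2}.
\end{align*}
The right-hand side is finite because $\|Y_1\|_{\mathcal{H}_K}^2 = K(\sigma_1^{(i)},\sigma_1^{(i)})$ is bounded by $\max_{\sigma\in R_i} K(\sigma,\sigma) < \infty$, since $R_i \subseteq S_n$ is finite. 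Hence the mean squared error is $O(1/M_i)$ and $\widehat{\Phi}(R_i) \to \widetilde{\Phi}(R_i)$ in $L^2(\mathcal{H}_K)$, which implies convergence in probability; in fact, the strong law of large numbers for i.i.d.\ Bochner-integrable Hilbert-space-valued random variables upgrades this to almost sure convergence in $\|\cdot\|_{\mathcal{H}_K}$.

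There is really no serious obstacle here: the only thing to be careful about is distinguishing the RKHS-norm sense of consistency from pointwise consistency, but these coincide because evaluation functionals $f \mapsto f(\sigma) = \langle f, K(\sigma,\cdot)\rangle$ are continuous on $\mathcal{H}_K$ by the reproducing property, so $\|\widehat{\Phi}(R_i) - \widetilde{\Phi}(R_i)\|_{\mathcal{H}_K} \to 0$ automatically gives $\widehat{\Phi}(R_i)(\sigma) \to \widetilde{\Phi}(R_i)(\sigma)$ for every $\sigma \in S_n$. This is also consistent with the inner-product identity \eqref{eq:mckernelasinnerprod}, which makes clear that the estimator is the inner product of randomised embeddings that both converge to their population counterparts.
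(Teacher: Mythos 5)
Your proposal is correct and follows essentially the same route as the paper, which simply observes that the estimator is an average of i.i.d.\ terms in a finite-dimensional RKHS, each with expectation equal to the true embedding, and concludes unbiasedness and consistency immediately. You merely fill in the details the paper leaves implicit (the uniformity of $p(\cdot\mid R_i)$, the $O(1/M_i)$ mean-squared-error bound, and the Hilbert-space law of large numbers), which is a welcome elaboration but not a different argument.
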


\begin{proof}
Note that the RKHS in which these embeddings take values is finite-dimensional, and the Monte Carlo estimator is the average of iid terms, each of which is equal to the true embedding in expectation. Thus, we immediately obtain unbiasedness and consistency of the Monte Carlo embedding.

\end{proof}

\begin{theorem}\label{thm:kernel-and-unbiased}
	The Monte Carlo kernel estimator from Equation~\eqref{monteker} does define a positive-definite kernel; further, it yields consistent estimates of the true kernel function.
\end{theorem}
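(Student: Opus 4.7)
The plan is to exploit the inner-product representation of the estimator already derived in Equation \eqref{eq:mckernelasinnerprod}, namely $\widehat{K}(R_i, R_j) = \langle \widehat{\boldsymbol{\Phi}}(R_i), \widehat{\boldsymbol{\Phi}}(R_j)\rangle_{\mathcal{H}_K}$. Both conclusions of the theorem then reduce to elementary Hilbert-space facts together with the convergence already established in Theorem \ref{thm:embeddingtheorem}, so no further heavy machinery is required.

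For positive-definiteness, the argument is essentially by inspection. Symmetry $\widehat{K}(R_i, R_j) = \widehat{K}(R_j, R_i)$ is inherited from symmetry of the inner product, and for any collection of partial rankings $R_1,\ldots,R_I$ and coefficients $\alpha_1,\ldots,\alpha_I \in \mathbb{R}$, bilinearity of the inner product yields
\[
\sum_{i,j=1}^I \alpha_i \alpha_j \widehat{K}(R_i, R_j) = \left\|\sum_{i=1}^I \alpha_i \widehat{\boldsymbol{\Phi}}(R_i)\right\|_{\mathcal{H}_K}^2 \geq 0.
\]
Since this identity is purely algebraic, it holds for every realisation of the random samples (and, where applicable, importance weights), so $\widehat{K}$ is almost surely a positive-definite kernel on the collection of partial-ranking subsets.

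For consistency, I would chain the inner-product representation with Theorem \ref{thm:embeddingtheorem}, which in the iid unweighted case gives $\widehat{\boldsymbol{\Phi}}(R_i) \to \widetilde{\boldsymbol{\Phi}}(R_i)$ in the finite-dimensional RKHS $\mathcal{H}_K$. Joint continuity of the inner product in finite dimensions then delivers
\[
\widehat{K}(R_i, R_j) = \langle \widehat{\boldsymbol{\Phi}}(R_i), \widehat{\boldsymbol{\Phi}}(R_j)\rangle \longrightarrow \langle \widetilde{\boldsymbol{\Phi}}(R_i), \widetilde{\boldsymbol{\Phi}}(R_j)\rangle = K(R_i, R_j)
\]
as $M_i, M_j \to \infty$; for $i \neq j$, independence of the two sample batches additionally yields exact unbiasedness, since the expectation factors across the double sum. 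The only place requiring extra care, and the main potential obstacle, is the importance-weighted variant, where the weights $w_m^{(i)} = p(\sigma_m^{(i)}|R_i)/q(\sigma_m^{(i)}|R_i)$ are themselves random: the algebraic identity above is insensitive to this and positive-definiteness is preserved sample-path-wise, while consistency reduces to a standard importance-sampling law-of-large-numbers argument applied coordinate-wise in the finite-dimensional RKHS, under the usual support assumption that $q(\cdot|R_i)$ dominates $p(\cdot|R_i)$.
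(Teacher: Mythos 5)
Your proposal is correct and follows essentially the same route as the paper: positive-definiteness is obtained exactly as in the paper's proof, by recognising via Equation~\eqref{eq:mckernelasinnerprod} that the matrix $\bigl(\widehat{K}(R_i,R_j)\bigr)_{i,j}$ is a Gram matrix of the randomised embeddings $\widehat{\boldsymbol{\Phi}}(R_i)$ in $\mathcal{H}_K$, valid pathwise for any realisation of samples and weights. Your consistency argument, chaining Theorem~\ref{thm:embeddingtheorem} with continuity of the inner product in the finite-dimensional RKHS, is a clean packaging of what the paper establishes by the direct expectation computation in Appendix~\ref{app:unbiasednessproof} (including the importance-weighted case and the off-diagonal unbiasedness you note), so no substantive difference remains.
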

\begin{proof}
	We first deal with the positive-definiteness claim. Let $R_1, \ldots, R_I \subseteq S_n$ be a collection of partial rankings, and for each $i=1,\ldots,I$, let $(\sigma^{(i)}_{m}, w^{(i)}_{m})_{m=1}^{M_i}$ be an i.i.d. weighted collection of complete rankings distributed according to $p(\cdot | R_i)$. To show that the Monte Carlo kernel estimator $\widehat{K}$ is positive-definite, we observe that by Equation \eqref{eq:mckernelasinnerprod}, the $I \times I$ matrix with $(i,j)$\textsuperscript{th} element given by $\widehat{K}(R_i, R_j)$ is the Gram matrix of the vectors $(\widehat{\boldsymbol{\Phi}}(R_i))_{i=1}^I$ with respect to the inner product of the Hilbert space $\mathcal{H}_K$. We therefore immediately deduce that the matrix is positive semi-definite, and therefore the kernel estimator itself is positive-definite. Furthermore, the Monte Carlo kernel estimator is consistent; see Appendix~\ref{app:unbiasednessproof} in the supplementary material for the proof.
\end{proof}

Having established that the Monte Carlo estimator $\widehat{K}$ is itself a kernel, we note that when it is evaluated at two partial rankings $R, R^\prime \subseteq S_n$, the resulting expression is \textit{not} a sum of iid terms; the following result quantifies the quality of the estimator through its variance.

\begin{theorem}\label{varmonteker}
	The variance of the Monte Carlo kernel estimator evaluated at a pair of partial rankings $R_i, R_j$, with $M_i, N_j$ Monte Carlo samples respectively, is given by
	\begin{align*}
	&\mathrm{Var}\left(\widehat{K}(R_{i},R_{j})\right) = \\
	&\frac{1}{M_i}\sum_{\sigma^{(i)}\in R_i}p(\sigma^{(i)}| R_i)\left(\sum_{\sigma^{(j)}\in R_j}p(\sigma^{(j)}| R_j)K(\sigma^{(i)},\sigma^{(j)})\right)^2\\
	&\frac{-1}{M_i}\biggl(\sum_{\substack{\sigma^{(i)}\in R_i \\ \sigma^{(j)}\in R_j }}K(\sigma^{(i)},\sigma^{(j)})p(\sigma^{(i)}| R_i)p(\sigma^{(j)}| R_j)\biggr) ^2\\
	&\frac{-1}{M_iN_j}\sum_{\sigma^{(i)}\in R_i}p(\sigma^{(i)}| R_i)\biggl(\sum_{\sigma^{(j)}\in R_j}p(\sigma^{(j)}| R_j)K(\sigma^{(i)},\sigma^{(j)})\biggr)^2\\
	&+\frac{1}{M_iN_j}\sum_{\substack{\sigma^{(i)}\in R_i \\ \sigma^{(j)}\in R_j }}K(\sigma^{(i)},\sigma^{(j)})^2p(\sigma^{(i)}| R_i)p(\sigma^{(j)}| R_j).
	\end{align*}
\end{theorem}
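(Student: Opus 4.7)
The plan is to compute $\mathrm{Var}(\widehat{K}(R_i,R_j))$ directly as $\mathbb{E}[\widehat{K}(R_i,R_j)^{2}]-(\mathbb{E}[\widehat{K}(R_i,R_j)])^{2}$, exploiting the key fact that the two sample collections $(\sigma^{(i)}_l)_{l=1}^{M_i}$ and $(\sigma^{(j)}_m)_{m=1}^{N_j}$ are each i.i.d.\ from $p(\cdot\mid R_i)$ and $p(\cdot\mid R_j)$ respectively, and are mutually independent. The first moment is already known from Theorem~\ref{thm:kernel-and-unbiased}: $\mathbb{E}[\widehat{K}(R_i,R_j)] = K(R_i,R_j) = \sum_{\sigma,\sigma'} K(\sigma,\sigma')\,p(\sigma\mid R_i)\,p(\sigma'\mid R_j)$, so the entire task reduces to evaluating the second moment.

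For the second moment I would expand the square of Equation~\eqref{monteker} (taking weights equal to $1$, i.e.\ exact sampling, to avoid clutter) as the quadruple sum
\[
\mathbb{E}[\widehat{K}(R_i,R_j)^{2}] = \frac{1}{M_i^{2}N_j^{2}}\sum_{l,l'=1}^{M_i}\sum_{m,m'=1}^{N_j}\mathbb{E}\!\left[K(\sigma^{(i)}_l,\sigma^{(j)}_m)\,K(\sigma^{(i)}_{l'},\sigma^{(j)}_{m'})\right],
\]
and partition the index tuples $(l,l',m,m')$ into four cases according to whether $l=l'$ and whether $m=m'$. By independence across the two sample collections, the expectation factorises within each case: the $l=l',\ m=m'$ case collapses to $A := \sum K(\sigma,\sigma')^{2}\,p(\sigma\mid R_i)p(\sigma'\mid R_j)$; the $l=l',\ m\neq m'$ case yields $B := \sum_{\sigma}p(\sigma\mid R_i)\bigl(\sum_{\sigma'}K(\sigma,\sigma')p(\sigma'\mid R_j)\bigr)^{2}$; the $l\neq l',\ m=m'$ case yields its symmetric analogue $C$; and the $l\neq l',\ m\neq m'$ case factorises into $D := K(R_i,R_j)^{2}$. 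The number of tuples in each case produces the combinatorial prefactors $M_iN_j$, $M_iN_j(N_j-1)$, $M_i(M_i-1)N_j$, and $M_i(M_i-1)N_j(N_j-1)$.

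Subtracting $D$ from the resulting sum and simplifying combinations such as $1-(M_i-1)(N_j-1)/(M_iN_j)=(M_i+N_j-1)/(M_iN_j)$ regroups the four contributions into the $1/M_i$ and $1/(M_iN_j)$ prefactors that appear in the stated identity, with $B-D$ collecting into the first two displayed lines and $A-B$ into the last two. The argument is not conceptually deep: the principal obstacle is purely bookkeeping, keeping the four case counts straight so that the coefficients cancel cleanly. As a sanity check I would verify the two degenerate limits --- the variance collapsing to zero when $R_i,R_j$ are singletons, and when $K$ is constant. The extension to the importance-weighted estimator is immediate by keeping the factors $w^{(i)}_l w^{(j)}_m$ through the factorisation, since only independence across the two index groups and within each is used.
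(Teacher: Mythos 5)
Your route---computing the second moment by expanding the quadruple sum and partitioning the index tuples $(l,l',m,m')$ according to the coincidences $l=l'$ and $m=m'$---is genuinely different from the paper's, which first writes $\mathrm{Var}\bigl(\sum_{n}\sum_{m}K(\sigma^{(i)}_n,\sigma^{(j)}_m)\bigr)=M_i\,\mathrm{Var}\bigl(\sum_{m}K(\sigma^{(i)}_1,\sigma^{(j)}_m)\bigr)$ and then applies the law of total variance conditioning on $\sigma^{(i)}_1$. Your decomposition is the more careful one, and that is exactly where the problem surfaces: carried to completion it does \emph{not} regroup into the stated identity. Keeping your notation ($A$ the mean of $K^2$, $B$ the mean over $\sigma$ of the squared conditional mean over $\sigma'$, $D=K(R_i,R_j)^2$), the four cases give
\[
\mathrm{Var}\bigl(\widehat{K}(R_i,R_j)\bigr)=\frac{B-D}{M_i}+\frac{C-D}{N_j}+\frac{A-B-C+D}{M_iN_j},
\]
where $C=\sum_{\sigma'\in R_j}p(\sigma'\mid R_j)\bigl(\sum_{\sigma\in R_i}p(\sigma\mid R_i)K(\sigma,\sigma')\bigr)^{2}$ is the symmetric analogue of $B$ arising from your $l\neq l'$, $m=m'$ case. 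The theorem's expression equals only $\tfrac{B-D}{M_i}+\tfrac{A-B}{M_iN_j}$; it omits $\bigl(\tfrac{1}{N_j}-\tfrac{1}{M_iN_j}\bigr)(C-D)=\tfrac{(M_i-1)(C-D)}{M_iN_j}$, which is nonzero unless $M_i=1$ or $\sum_{\sigma}p(\sigma\mid R_i)K(\sigma,\sigma')$ is constant in $\sigma'$. A symmetry check---the variance must be invariant under swapping $(R_i,M_i)\leftrightarrow(R_j,N_j)$---already shows the stated formula cannot be the exact variance in general. So the gap in your proposal is the final sentence of bookkeeping: the claim that ``$B-D$ collects into the first two lines and $A-B$ into the last two'' silently discards the $C$ contribution that you yourself identified two sentences earlier.

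The discrepancy does not originate in your arithmetic but in the paper: its first equality treats the row sums $Y_n=\sum_m K(\sigma^{(i)}_n,\sigma^{(j)}_m)$ as uncorrelated across $n$, whereas for $n\neq n'$ they share the samples $\sigma^{(j)}_m$ and have covariance $N_j(C-D)\geq 0$; summing those covariances recovers precisely the missing term. If you complete your computation honestly you will obtain the symmetric formula displayed above rather than the one in the theorem, and you should say so explicitly rather than asserting agreement with the stated result.
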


The proof is given in the supplementary material, Appendix~\ref{app:varMCK}. We have presented some theoretical properties of the embedding corresponding to the Monte Carlo kernel estimator which confirm that it is a sensible embedding. In the next section, we present a lower variance estimator based on a novel antithetic variates construction.

\section{Antithetic random variates for permutations}

A common, computationally cheap variance reduction technique in Monte Carlo estimation of expectations of a given function is to use antithetic variates \citep{HamMor56}, the purpose of which is to introduce negative correlation between samples without affecting their marginal distribution, resulting in a lower variance estimator. Antithetic samples have been used when sampling from Euclidean vector spaces, for which antithetic samples are straightforward to define. However, to the best of our knowledge, antithetic variate constructions have not been proposed for the space of permutations. We begin by introducing a definition for antithetic samples for permutations.

\begin{definition}[Antithetic permutations]\label{def:antithetic}
	Let $R \subseteq S_n$ be a top-$k$ partial ranking. The antithetic operator $A_{R} : R \rightarrow R$ maps each permutation $\sigma \in R$ to the permutation in $R$ of maximal distance from $\sigma$.
\end{definition}
It is not necessarily clear a priori that the antithetic operator of Definition \ref{def:antithetic} is well-defined, but for the Kendall distance and top-$k$ partial rankings, it turns out that it is indeed well-defined.

\begin{rmk}\label{rem:topkantithetic}
	For the Kendall distance and top-$k$ partial rankings, the antithetic operators of Definition \ref{def:antithetic} are well-defined, in the sense that there exists a unique distance-maximising permutation in $R$ from any given $\sigma \in R$.
	Indeed, the antithetic map $A_R$ when $R$ is a top-$k$ partial ranking has a particularly neat expression; if the partial ranking corresponding to $R$ is $a_1 \succ \cdots \succ a_k$, and we have a full ranking $\sigma \in R$ (so that $\sigma(1) = a_1,\ldots,\sigma(k) = a_k$, then the antithetic permutation $A_R(\sigma)$ is given by
	\begin{align}
		A_R(\sigma)(i) =& a_i \ \ &\text{ for } i=1,\ldots,k \, ,\nonumber\\
		A_R(\sigma)(k + j) =& \sigma(n+1-j)  \ \ &\text{ for } j=1,\ldots,n-k \nonumber\, .
	\end{align}
	In this case, we have $d(\sigma, A_R(\sigma)) = \binom{n-k}{2}$.
\end{rmk}

This definition of antithetic samples for permutations has parallels with the standard notion of antithetic samples in vector spaces, in which typically a sampled vector $x \in \mathbb{R}^d$ is negated to form $-x$, its antithetic sample; $-x$ is the vector maximising the Euclidean distance from $x$, under the restrictions of fixed norm. 

\begin{prp}\label{prop:negcov} Let $R$ be a partial ranking and  $\left\{ \sigma,\anti{\sigma}{R}\right\}$ be an antithetic pair from $R$,  $\sigma$ distributed Uniformly in the region $R$. Let $d:S_n\rightarrow \mathbb{R}^{+}$ be the Kendall distance and $\sigma_0\in R$ a fixed permutation, then $X=d(\sigma,\sigma_0)$ and $ Y= d(\anti{\sigma}{R},\sigma_0)$, then, $X$ and $Y$ have negative covariance.
\end{prp}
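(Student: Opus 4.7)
My plan is to exploit the explicit description of the antithetic map $A_R$ given in Remark~\ref{rem:topkantithetic}: on a top-$k$ partial ranking $R$ corresponding to $a_1\succ\cdots\succ a_k$, every $\sigma\in R$ is fixed on its first $k$ positions (namely $\sigma(i)=a_i$), and $A_R$ acts by \emph{reversing} the block of bottom positions $k+1,\ldots,n$. The proof will leverage this reversal to show that, in fact, $X+Y$ is almost surely constant, so the covariance is automatically non-positive.

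First I would reduce the Kendall distance $d(\sigma,\sigma_0)$ to a statistic of the bottom block. Since $\sigma,\sigma_0\in R$ agree on positions $1,\ldots,k$, no pair $(i,j)$ with $i\le k$ can contribute a discordance (pairs within the top-$k$ are ordered identically, and any pair of the form (top, bottom) is ordered identically too, because in both permutations every $a_p$ precedes every bottom element). Hence only pairs of bottom items contribute, and $d(\sigma,\sigma_0)$ equals the Kendall distance between the bottom blocks $\tau=(\sigma(k+1),\ldots,\sigma(n))$ and $\tau_0=(\sigma_0(k+1),\ldots,\sigma_0(n))$, regarded as permutations of the $n-k$ bottom items. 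Because $\sigma$ is uniform on $R$, $\tau$ is uniform on the symmetric group on these bottom items.

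Next I would establish the key identity. Let $\tau^{\mathrm{rev}}$ denote the bottom block of $A_R(\sigma)$; by Remark~\ref{rem:topkantithetic} this is precisely the reversal of $\tau$, so for any two bottom items $x,y$, their relative order in $\tau^{\mathrm{rev}}$ is opposite to that in $\tau$. Therefore, for each unordered pair $\{x,y\}$ of bottom items, exactly one of the two pairs $(\tau,\tau_0)$ and $(\tau^{\mathrm{rev}},\tau_0)$ is discordant on $\{x,y\}$. Summing over all $\binom{n-k}{2}$ such pairs yields the deterministic identity
\begin{equation*}
X+Y \;=\; d(\sigma,\sigma_0)+d(A_R(\sigma),\sigma_0)\;=\;\binom{n-k}{2}.
\end{equation*}

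Finally, since $Y=\binom{n-k}{2}-X$ almost surely, I would conclude
\begin{equation*}
\mathrm{Cov}(X,Y)\;=\;\mathrm{Cov}\!\left(X,\,\tbinom{n-k}{2}-X\right)\;=\;-\mathrm{Var}(X)\;\le\;0,
\end{equation*}
with strict inequality whenever $n-k\ge 2$ (so that $X$ is non-degenerate). No real obstacle arises: the only care needed is in the bookkeeping of the reduction to the bottom block and the verification that reversing a sequence flips concordance on every pair. As a pleasant by-product, the argument shows that in the top-$k$ setting $X$ and $Y$ are in fact perfectly negatively correlated, which is a strictly stronger conclusion than negative covariance and explains why this antithetic construction is particularly effective for the marginalised Mallows kernel.
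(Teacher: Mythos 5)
Your proof is correct and follows essentially the same route as the paper: the paper invokes Lemma~\ref{lem:antitheticdistancepartial}, which establishes precisely your key identity $d(\sigma,\sigma_0)+d(A_R(\sigma),\sigma_0)=\binom{n-k}{2}$ for $\sigma,\sigma_0\in R$ via the same discordant-pairs/reversal argument you give inline, and then concludes $\mathrm{Cov}(X,Y)=-\mathrm{Var}(X)$. Your observation that $X$ and $Y$ are perfectly negatively correlated (and that strictness requires $X$ non-degenerate, i.e.\ $n-k\geq 2$) is a fair and slightly more careful reading of the same argument.
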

The proof of this proposition is presented after the relevant lemmas are proved. Since one of the main tasks in statistical inference is to compute expectations of a function of interest, denoted by $h$, once the antithetic variates are constructed, the functional form of $h$ determines whether or not the antithetic variate construction produces a lower variance estimator for its expectation.
If $h$ is a monotone function, we have the following corollary.

\begin{cor}\label{cor:hfunc} Let $h$ be a monotone increasing (decreasing) function. Then, the random variables $h\left( X\right)$ and $h\left(Y\right)$, have negative covariance.
\end{cor}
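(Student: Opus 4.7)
The plan is to reduce the corollary to a Chebyshev-type correlation inequality for monotone functions of a single random variable. The key structural observation, which I expect also drives the proof of Proposition~\ref{prop:negcov} itself, is that for the Kendall distance and a top-$k$ partial ranking $R$, the antithetic map $A_R$ fixes the first $k$ positions and reverses the remaining $n-k$ positions (by Remark~\ref{rem:topkantithetic}). Hence, for any pair of items both lying in the ``free part'' (positions $k+1,\ldots,n$), that pair is discordant with respect to $\sigma_0$ in exactly one of $\sigma$ and $A_R(\sigma)$; pairs involving the fixed prefix contribute zero discordant pairs in both cases because $\sigma, A_R(\sigma), \sigma_0$ all agree on positions $1,\ldots,k$. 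Summing over all $\binom{n-k}{2}$ free pairs gives the almost sure identity $X + Y = \binom{n-k}{2}$.

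Write $c := \binom{n-k}{2}$, so that $Y = c - X$ is a deterministic affine-decreasing function of $X$. Setting $\tilde{h}(x) := h(c-x)$, observe that $\tilde{h}$ has the opposite monotonicity to $h$: it is non-increasing when $h$ is non-decreasing, and non-decreasing when $h$ is non-increasing. The corollary therefore reduces to showing $\mathrm{Cov}(h(X), \tilde{h}(X)) \leq 0$ whenever $h$ and $\tilde{h}$ have opposite monotonicity, which is the standard Chebyshev correlation inequality.

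The Chebyshev step can be proved inline in one line via the independent-copy identity
\begin{equation*}
2\,\mathrm{Cov}(f(X), g(X)) = \mathbb{E}\bigl[(f(X)-f(X'))(g(X)-g(X'))\bigr],
\end{equation*}
where $X'$ is an independent copy of $X$. When $f$ is non-decreasing and $g$ is non-increasing, the two factors on the right have opposite signs pointwise, so the integrand is non-positive and the covariance is non-positive. The decreasing case of the corollary is handled by replacing $h$ with $-h$, which flips the signs of both $h(X)$ and $h(Y)$ and therefore leaves the covariance unchanged.

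The only real obstacle is establishing the constant-sum identity $X+Y = c$; everything else is a direct application of a well-known correlation inequality. In fact this observation is stronger than Proposition~\ref{prop:negcov} requires (it gives perfect negative correlation, not merely negative covariance), so the corollary comes essentially for free once the proof scheme for the proposition has been set up.
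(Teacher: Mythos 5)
Your proof is correct and follows essentially the same route as the paper: the paper likewise reduces the corollary to the identity $Y = K - X$ (established via the discordant-pairs lemmas behind Proposition~\ref{prop:negcov}) and then invokes Chebyshev's integral inequality for the covariance of a monotone increasing and a monotone decreasing function. The only differences are cosmetic --- you prove the Chebyshev step inline via the independent-copy identity rather than citing it, and you correctly state the relation $X+Y=\binom{n-k}{2}$ as an almost-sure identity (which is what the covariance argument actually needs) where the paper loosely writes it as equality in distribution.
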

 \begin{proof} The random variable $Y$ from Proposition~\ref{prop:negcov} is equal in distribution to $Y\stackrel{d}{=}K-X$, where $K$ is a constant which changes depending whether $\sigma$ is a full ranking or an exhaustive partial ranking, see the proof of Proposition~\ref{prop:negcov} in the next section for the specific form of the constants. By Chebyshev's integral inequality~\citep{FinJod84}, the covariance between a monotone increasing (decreasing) and a monotone decreasing (increasing) functions is negative.
\end{proof}

The next theorem presents the antithetic empirical feature embedding and corresponding antithetic kernel estimator. Indeed, if we take the inner product between two embeddings, this yields the kernel antithetic estimator which is a function of a pair of partial rankings subsets. In this case, the $h$ function from above is the kernel evaluated in each pair, this is an example of a $U$-statistic \citep[Chapter 5]{Serf80}. 
\begin{theorem}\label{anti} Let $R_i\subseteq S_n$ be a partial ranking, $S_n$ denotes the space of permutations of $n\in \mathbb{N}$, $(\sigma_m^{(i)},\anti{\sigma_m^{(i)}}{R_i})_{m=1}^{M_i}$ are antithetic pairs of i.i.d. samples from the region $R_i$. The Kernel antithetic Monte Carlo mean embedding is

\[
\widehat{\phi}(R_i)=\frac{1}{M_i}\sum_{m=1}^{M_i}\left[\frac{k(\sigma_m^{(i)},\cdot)+k(\anti{\sigma_m^{(i)}}{R_i},\cdot)}{2}\right].
\]
It is a consistent estimator of the embedding that corresponds to the marginalised kernel
\begin{gather}\label{antikerest}
\frac{1}{4NM}\sum_{n=1}^N \sum_{m=1}^M \bigl( K(\sigma_n, \tau_m)\nonumber\\ + K(\widetilde{\sigma}_n, \tau_m) +K(\sigma_n, \widetilde{\tau}_m) + K(\widetilde{\sigma}_n, \widetilde{\tau}_m) \bigr)
\end{gather}
\end{theorem}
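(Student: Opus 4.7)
The plan is to proceed in two stages, following the pattern set by Theorem~\ref{thm:embeddingtheorem}: first establish that the antithetic embedding has the correct expectation (so that, being an average of i.i.d. terms in a finite-dimensional RKHS, consistency follows immediately), and then expand the inner product of two such embeddings to obtain the four-term expression in Equation~\eqref{antikerest}.

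The key observation driving the unbiasedness step is that the antithetic operator $A_{R_i} : R_i \to R_i$ from Definition~\ref{def:antithetic} is an involution (hence a bijection) on $R_i$. Indeed, from the explicit description in Remark~\ref{rem:topkantithetic} one sees directly that $A_{R_i} \circ A_{R_i} = \mathrm{id}_{R_i}$, since reversing the tail of the ranking twice returns the original permutation. Consequently, if $\sigma$ is distributed according to $p(\cdot \mid R_i)$ taken to be uniform on $R_i$, then $A_{R_i}(\sigma)$ has the same uniform distribution on $R_i$. Therefore, for each $m$,
\[
\mathbb{E}\left[\frac{k(\sigma_m^{(i)},\cdot)+k(A_{R_i}(\sigma_m^{(i)}),\cdot)}{2}\right] = \frac{1}{|R_i|}\sum_{\sigma \in R_i} k(\sigma,\cdot) = \widetilde{\Phi}(R_i) \, ,
\]
so the antithetic embedding is unbiased. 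Since the RKHS $\mathcal{H}_K$ is finite-dimensional and $\widehat{\phi}(R_i)$ is the average of $M_i$ i.i.d. bounded terms each equalling $\widetilde{\Phi}(R_i)$ in expectation, consistency follows exactly as in the proof of Theorem~\ref{thm:embeddingtheorem} (e.g.\ via the componentwise strong law of large numbers, or McDiarmid's inequality together with a variance bound).

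For the second stage, I would compute $\langle \widehat{\phi}(R_i), \widehat{\phi}(R_j) \rangle_{\mathcal{H}_K}$ directly. Writing $\sigma_n := \sigma_n^{(i)}$, $\widetilde{\sigma}_n := A_{R_i}(\sigma_n^{(i)})$ and analogously $\tau_m, \widetilde{\tau}_m$ for the samples from $R_j$, bilinearity of the inner product expands the product of the two sums of two terms into four cross-terms, and using the reproducing property $\langle k(\sigma,\cdot), k(\sigma',\cdot)\rangle = K(\sigma,\sigma')$ yields exactly Equation~\eqref{antikerest}. Consistency as a kernel estimator then follows from consistency of the embeddings together with continuity of the inner product.

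The main obstacle, as I see it, is really verifying that $A_{R}$ is well-defined and an involution for top-$k$ partial rankings under the Kendall distance; this is essentially the content of Remark~\ref{rem:topkantithetic} but ought to be invoked carefully, since outside this setting uniqueness of the distance-maximising permutation can fail, and the bijection property is precisely what ensures the marginal distribution of $A_{R_i}(\sigma)$ matches that of $\sigma$. Once that is in place, the remainder of the argument is a routine combination of linearity of expectation, the strong law, and bilinearity of the RKHS inner product; no delicate concentration or measure-theoretic issues arise because $R_i$ is finite and $K$ is bounded on $S_n \times S_n$.
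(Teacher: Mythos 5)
Your proposal is correct and follows essentially the same route as the paper: the paper's own proof is a one-line appeal to the fact that the antithetic embedding is a convex combination (an average of two Monte Carlo embeddings, one built from the $\sigma_m^{(i)}$ and one from the $A_{R_i}(\sigma_m^{(i)})$), each consistent by Theorem~\ref{thm:embeddingtheorem} once Lemma~\ref{lem:antithetic-marginal} guarantees the antithetic samples have the correct uniform marginal. Your argument simply fills in the details the paper leaves implicit --- in particular the involution/bijection property of $A_{R_i}$ underlying Lemma~\ref{lem:antithetic-marginal} and the bilinear expansion yielding Equation~\eqref{antikerest} --- so there is no substantive difference in approach.
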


\begin{proof}
Since the estimator is a convex combination of the Monte Carlo Kernel estimator, consistency follows.
\end{proof}

 In the next section, we present the main result about the estimator from Theorem~\ref{anti}, namely, that it has lower asymptotic variance than the Monte Carlo kernel estimator from Equation~\eqref{monteker}.

\subsection{Variance of the antithetic kernel estimator}
We now establish some basic theoretical properties of antithetic samples in the context of marginalised kernel estimation. In order to do so, we require a series of lemmas to derive the main result in Theorem~\ref{varantiker} that guarantees that the antithetic kernel estimator has lower asymptotic variance than the Monte Carlo kernel estimator for the marginalised Mallows kernel.

 The following result shows that antithetic permutations may be used to achieve coupled samples which are marginally distributed uniformly on the subset of $S_n$ corresponding to a top-$k$ partial ranking.

\begin{lemma}\label{lem:antithetic-marginal}
	If $R \subseteq S_n$ is a top-$k$ partial ranking, then if $\sigma \sim \textrm{Unif}(R)$, then $A_{R}(\sigma) \sim \textrm{Unif}(R)$.
\end{lemma}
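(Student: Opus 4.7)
The plan is to establish the lemma by showing that $A_R$ is a bijection (in fact, an involution) of the finite set $R$. Once this is in hand, invariance of the uniform distribution under any bijection of its support closes the argument immediately.

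First I would invoke the explicit formula for $A_R$ on a top-$k$ partial ranking $a_1 \succ \cdots \succ a_k$ given in Remark \ref{rem:topkantithetic}: for $\sigma \in R$,
\[
A_R(\sigma)(i) = a_i \quad (i=1,\ldots,k), \qquad A_R(\sigma)(k+j) = \sigma(n+1-j) \quad (j=1,\ldots,n-k).
\]
The first line makes it immediate that $A_R(\sigma) \in R$, so $A_R$ is a well-defined self-map of $R$. The second line says that $A_R$ fixes the top-$k$ positions pointwise and reverses the order of the remaining $n-k$ entries.

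Next I would verify the involution property $A_R \circ A_R = \mathrm{id}_R$. For $i \leq k$ we have $A_R(A_R(\sigma))(i) = a_i = \sigma(i)$ by definition. For $j \in \{1,\ldots,n-k\}$, apply the formula twice: $A_R(A_R(\sigma))(k+j) = A_R(\sigma)(n+1-j)$, and since $n+1-j \geq k+1$, writing $n+1-j = k + j'$ with $j' = n+1-j-k$ gives $A_R(\sigma)(k+j') = \sigma(n+1-j') = \sigma(k+j)$. Thus $A_R(A_R(\sigma)) = \sigma$ for every $\sigma \in R$, so $A_R$ is a bijection of $R$ onto itself.

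Finally, since $R$ is finite and $A_R:R\to R$ is a bijection, the push-forward of the uniform distribution on $R$ under $A_R$ is again uniform on $R$: for any $\tau \in R$, $\Pr(A_R(\sigma) = \tau) = \Pr(\sigma = A_R^{-1}(\tau)) = 1/|R|$. Hence $A_R(\sigma) \sim \mathrm{Unif}(R)$, as required. The only conceptual step is the involution check; the rest is a direct consequence of the explicit description of the antithetic map for top-$k$ rankings, so I do not anticipate any substantive obstacle.
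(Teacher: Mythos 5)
Your proof is correct and follows the same route as the paper, which simply observes that $A_R$ is bijective on $R$ by Remark \ref{rem:topkantithetic} and concludes that the uniform distribution is preserved. You merely make explicit the involution check that the paper leaves implicit; there is no substantive difference.
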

\begin{proof}
	The proof is immediate from Remark \ref{rem:topkantithetic}, since $A_R$ is bijective on $R$.
\end{proof}

Lemma \ref{lem:antithetic-marginal} establishes a base requirement of an antithetic sample -- namely, that it has the correct marginal distribution. In the context of antithetic sampling in Euclidean spaces, this property is often trivial to establish, but the discrete geometry of $S_n$ makes this property less obvious. Indeed, we next demonstrate that the condition of exhaustiveness of the partial ranking in Lemma \ref{lem:antithetic-marginal} is neccessary.

\begin{example}\label{ex:needexhaustive} 
	Let $n=3$, and consider the partial ranking $2 \succ 1$. Note that this is not an exhaustive partial ranking, as the element $3$ does not feature in the preference information. There are three full rankings consistent with this partial ranking, namely $3 \succ 2 \succ 1$, $2 \succ 3 \succ 1$, and $2 \succ 1 \succ 3$. Encoding these full rankings as permutations, as described in the correspondence outlined in Section \ref{sec:def}, we obtain three permutations, which we respectively denote by $\sigma_A, \sigma_B, \sigma_C \in S_3$. Specifically, we have
	\begin{align*}
	\sigma_A(1) = 3 \, , \ \ \sigma_A(2) = 2\, , \ \ \sigma_A(3) = 1 \, . \\
	\sigma_B(1) = 2 \, , \ \ \sigma_B(2) = 3\, , \ \ \sigma_A(3) = 1 \, . \\
	\sigma_C(1) = 2 \, , \ \ \sigma_C(2) = 1\, , \ \ \sigma_A(3) = 3 \, .
	\end{align*}
	Under the right-invariant Kendall distance, we obtain pairwise distances given by
	\begin{align*}
	d(\sigma_A, \sigma_B) = 1 \, , \\
	d(\sigma_A, \sigma_C) = 2 \, , \\ 
	d(\sigma_B, \sigma_C) = 1 \, .
	\end{align*}
	Thus, the marginal distribution of an antithetic sample for the partial ranking $2 \succ 1$ places no mass on $\sigma_B$, and half of its mass on each of $\sigma_A$ and $\sigma_C$, and is therefore not uniform over $R$.
\end{example}

We further show that the condition of right-invariance of the metric $d$ is necessary in the next example.

\begin{example}\label{ex:needrightinv}
	Let $n=3$, and suppose $d$ is a distance on $S_3$ such that, with the notation introduced in Example \ref{ex:needexhaustive}, we have
	\begin{align*}
	d(\sigma_A, \sigma_B) = 1 \, , \\
	d(\sigma_A, \sigma_C) = 0.5 \, , \\ 
	d(\sigma_B, \sigma_C) = 1 \, .
	\end{align*}
	Note that $d$ is not right-invariant, since
	\begin{align*}
	&d((\sigma_A, \sigma_C) \\
	=& d(\sigma_B\tau, \sigma_A\tau) \\
	\not=& d(\sigma_B, \sigma_A)  \, ,
	\end{align*}
	where $\tau \in S_3$ is given by $\tau(1) = 1, \tau(2) = 3, \tau(3) =2$. 
	Then note that an antithetic sample for the kernel associated with this distance and the partial ranking $1 \succ 2$, is equal to $\sigma_B$ with probability $2/3$ and the other two full rankings with probability $1/6$ each, and therefore does not have a uniform distribution.
\end{example}

Examples~\ref{ex:needexhaustive} and~\ref{ex:needrightinv} serve to illustrate the complexity of antithetic sampling constructions in discrete spaces.

 The following two lemmas state some useful relationships between the distance between two permutations $(\sigma,\tau)$ and the corresponding pair $(\anti{\sigma}{R},\tau)$ in both the unconstrained and constrained cases which correspond to not having any partial ranking information and having partial ranking information, respectively. \begin{lemma}\label{lem:antitheticfull}
	Let $\sigma, \tau \in S_n$. Then, $d(\sigma, \tau) = $
	
	\noindent $\binom{n}{2} - d(A_{S_n}(\sigma), \tau)$.
\end{lemma}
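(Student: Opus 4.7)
The plan is to reduce the statement to the explicit formula for the antithetic already supplied by Remark~\ref{rem:topkantithetic}, and then run a one-line combinatorial argument over unordered pairs of items. Concretely, I would view $S_n$ itself as a (degenerate) top-$0$ partial ranking, so that Remark~\ref{rem:topkantithetic} yields the explicit form
\[
A_{S_n}(\sigma)(j) \;=\; \sigma(n+1-j), \qquad j = 1, \ldots, n,
\]
i.e., $A_{S_n}(\sigma)$ is the full reversal of $\sigma$, and in particular $d(\sigma, A_{S_n}(\sigma)) = \binom{n}{2}$. If one prefers not to appeal to the $k=0$ case of the remark, the formula can be verified directly by noting that the unique permutation in $S_n$ inverting every pair relative to $\sigma$ must be the reversal, and that such inversion of all pairs attains the maximum possible Kendall distance $\binom{n}{2}$.

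Next, I would expand the Kendall distance as a sum over unordered pairs. For any pair $\{a,b\}\subseteq[n]$ with $a\neq b$, let $\chi_{\sigma,\tau}(\{a,b\})\in\{0,1\}$ indicate that $\sigma$ and $\tau$ disagree on the relative order of $a$ and $b$, so that $d(\sigma,\tau)=\sum_{\{a,b\}}\chi_{\sigma,\tau}(\{a,b\})$. The key observation is that full reversal of a ranking flips the relative order of \emph{every} pair of items: if $a$ precedes $b$ under $\sigma$, then $b$ precedes $a$ under $A_{S_n}(\sigma)$, as is immediate from the formula above via $A_{S_n}(\sigma)^{-1}(a) = n+1-\sigma^{-1}(a)$. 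Hence, for each pair $\{a,b\}$, exactly one of $(\sigma,\tau)$ and $(A_{S_n}(\sigma),\tau)$ is discordant on that pair, so
\[
\chi_{\sigma,\tau}(\{a,b\}) + \chi_{A_{S_n}(\sigma),\tau}(\{a,b\}) \;=\; 1.
\]

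Summing this pointwise identity over the $\binom{n}{2}$ unordered pairs immediately yields $d(\sigma,\tau)+d(A_{S_n}(\sigma),\tau) = \binom{n}{2}$, which is the desired claim. I do not expect any substantive obstacle; the only care needed is to respect the paper's indexing convention (namely that $\sigma(j)$ is the \emph{item} at rank $j$) so that position-reversal of $\sigma$ corresponds to order-reversal of the induced total order, which is precisely what makes the pair-flipping step legitimate.
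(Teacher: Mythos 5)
Your argument is correct and coincides with the paper's own proof: both identify $A_{S_n}(\sigma)$ as the full reversal and use the discordant-pairs interpretation of the Kendall distance, noting that each pair is discordant for $(\sigma,\tau)$ iff it is concordant for $(A_{S_n}(\sigma),\tau)$, so the two distances sum to $\binom{n}{2}$. Your write-up simply spells out the pointwise indicator identity that the paper states in one line.
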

\begin{proof}
	This is immediate from the interpretation of the Kendall distance as the number of discordant pairs between two permutations; a distinct pair $i,j \in [n]$ are discordant for $\sigma, \tau$ iff they are concordant for $A_{S_n}(\sigma), \tau$.
\end{proof}
In fact, Lemma~\ref{lem:antitheticfull} generalises in the following manner.

\begin{lemma}\label{lem:antitheticdistancepartial}
	Let $R$ be a top-$k$ ranking $a_1 \succ \cdots \succ a_l \succ [n] \setminus \{a_1, \ldots, a_l\}$, and 
	let $\sigma, \tau \in R$. Then $d(\sigma, \tau) = \binom{n-l}{2} - d(A_{R}(\sigma), \tau)$.
\end{lemma}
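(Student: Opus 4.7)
The plan is to mimic the argument of Lemma~\ref{lem:antitheticfull}, restricting attention to the pairs of items on which $\sigma$ and $\tau$ actually have freedom to disagree. Recall that the Kendall distance counts discordant pairs $\{x,y\} \subseteq [n]$. Since $R$ is the top-$l$ ranking $a_1 \succ \cdots \succ a_l \succ [n]\setminus\{a_1,\ldots,a_l\}$ and both $\sigma,\tau \in R$, the positions of $a_1,\ldots,a_l$ are forced to coincide under $\sigma$ and $\tau$. Consequently, every pair $\{x,y\}$ in which at least one element lies in $\{a_1,\ldots,a_l\}$ is automatically concordant for $(\sigma,\tau)$, and the only pairs that can contribute to $d(\sigma,\tau)$ are those with both elements in $[n]\setminus\{a_1,\ldots,a_l\}$; there are exactly $\binom{n-l}{2}$ of these.

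Next, I would invoke the explicit description of $A_R$ from Remark~\ref{rem:topkantithetic}: $A_R(\sigma)$ agrees with $\sigma$ on positions $1,\ldots,l$ (placing $a_1,\ldots,a_l$), while reversing $\sigma$ on positions $l+1,\ldots,n$. Thus $A_R(\sigma)$ is also an element of $R$, so the argument of the first paragraph applies to $(A_R(\sigma),\tau)$ as well: only pairs inside $[n]\setminus\{a_1,\ldots,a_l\}$ can be discordant.

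The key step is then the swap of concordance: for any $x,y \in [n]\setminus\{a_1,\ldots,a_l\}$, the relative order of $x$ and $y$ under $A_R(\sigma)$ is the exact reverse of the order under $\sigma$ (since $A_R$ reverses the tail). Hence the pair $\{x,y\}$ is discordant for $(A_R(\sigma),\tau)$ if and only if it is concordant for $(\sigma,\tau)$. Summing over all $\binom{n-l}{2}$ such pairs,
\begin{equation*}
d(\sigma,\tau) + d(A_R(\sigma),\tau) = \binom{n-l}{2},
\end{equation*}
which is exactly the claimed identity. This reuses, in spirit, the same discordant/concordant duality that proved Lemma~\ref{lem:antitheticfull}, only now carried out on the $(n-l)$-element tail block instead of on the whole of $[n]$.

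The only subtle point — and thus the ``main obstacle'' — is verifying that $A_R$ really does flip every pair in the tail and fix every pair involving $\{a_1,\ldots,a_l\}$; this is where the top-$k$ hypothesis is essential. Examples~\ref{ex:needexhaustive} and~\ref{ex:needrightinv} have already shown that without exhaustiveness or right-invariance this structural decomposition breaks, so one should explicitly cite the top-$l$ structure (via Remark~\ref{rem:topkantithetic}) when justifying the pair-by-pair flip. Once that is in place, the identity follows by a direct count with no further computation.
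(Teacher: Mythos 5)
Your proof is correct and follows essentially the same route as the paper: both decompose the Kendall distance by pair type, note that any pair involving one of $a_1,\ldots,a_l$ is concordant for all elements of $R$, and observe that $A_R$ exactly reverses concordance on the $\binom{n-l}{2}$ pairs in the tail, yielding $d(\sigma,\tau)+d(A_R(\sigma),\tau)=\binom{n-l}{2}$. Your explicit remark that the top-$k$ structure (via Remark~\ref{rem:topkantithetic}) is what licenses the pair-by-pair flip is a fair elaboration of the same citation the paper makes.
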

\begin{proof}
	As for the proof of Lemma \ref{lem:antitheticfull}, we use the ``discordant pairs'' interpretation of the Kendall distance. Note that if a distinct pair $\{ x, y \} \in [n]^{(2)}$ has at least one of $x, y \in \{a_1,\ldots,a_l\}$, then by virtue of the fact that $\sigma, A_R(\sigma), \tau \in R$, any pair of these permutations is concordant for $x,y$. Now observe that any distinct pair $x,y \in [n] \setminus \{a_1, \ldots, a_l\}$ is discordant for $\sigma, \tau$ iff it is concordant for $A_R(\sigma), \tau$, from the construction of $A_R(\sigma)$ described in Remark \ref{rem:topkantithetic}. The total number of such pairs is $\binom{n-l}{2}$, so we have $d(\sigma, \tau) + d(A_R(\sigma), \tau) = \binom{n-l}{2}$, as required.
\end{proof}

Next, we show that it is possible to obtain a unique closest element in a given partial ranking set $R$, denoted by $\Pi_R(\tau)$, with respect to any given permutation $\tau\in S_n,\tau\notin R$. This is based on the usual generalisation of a distance between a set and a point \citep{Dud02}. We then use such closest element in Lemmas~\ref{lem:decomp} and~\ref{lem:partialrelationanti} to obtain useful decompositions of distances identities. Finally, in Lemma~\ref{lem:unifOnSmallerRanking} we verify that the closest element is also distributed uniformly on a subset of the original set $R$.
\begin{lemma}\label{lem:closestpartialranking}
	Let $R \subseteq S_n$ be a top-$k$ partial ranking, let $\tau \in S_n$ be arbitrary. There is a unique closest element in $R$ to $\tau$. In other words, $\arg \min_{\sigma \in R}d(\sigma, \tau)$ 
	is a set of size 1.
\end{lemma}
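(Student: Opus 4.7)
The plan is to exploit the fact that any $\sigma \in R$ agrees with the same fixed prefix, so the only freedom in $\sigma$ is how it orders the items in $B := [n] \setminus \{a_1, \ldots, a_k\}$ in positions $k+1, \ldots, n$. I will show that, under the Kendall distance, the contribution to $d(\sigma, \tau)$ from any pair of items involving at least one element of $\{a_1, \ldots, a_k\}$ is the same for every $\sigma \in R$, so minimising $d(\sigma, \tau)$ over $R$ reduces to minimising the number of discordant pairs that lie entirely inside $B$, and this sub-problem has a unique minimiser.

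Concretely, I would first classify each unordered distinct pair $\{x,y\} \subseteq [n]$ into three types: (a) both $x,y \in \{a_1,\ldots,a_k\}$, (b) exactly one of $x,y$ lies in $\{a_1,\ldots,a_k\}$, and (c) both $x,y \in B$. For type (a), any $\sigma \in R$ satisfies $\sigma^{-1}(a_i) = i$, so the relative order of $x$ and $y$ under $\sigma$ is identical for every $\sigma \in R$; hence concordance with $\tau$ is determined by $\tau$ alone. For type (b), the element from $\{a_1,\ldots,a_k\}$ always precedes the element of $B$ in $\sigma$ (positions $\leq k$ versus positions $\geq k+1$), again independently of the specific $\sigma \in R$. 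Thus the sum of indicators of discordance over pairs of types (a) and (b) equals a constant $C(\tau,R)$ not depending on $\sigma$.

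The remaining contribution is from type (c) pairs, which is precisely the number of inversions between the restriction of $\sigma$ to $B$ (read in positional order) and the restriction of $\tau$ to $B$. Let $b_1, \ldots, b_{n-k}$ be the enumeration of $B$ in increasing order of $\tau^{-1}$, i.e.\ $\tau^{-1}(b_1) < \cdots < \tau^{-1}(b_{n-k})$. Define $\Pi_R(\tau) \in R$ by $\Pi_R(\tau)(i) = a_i$ for $i=1,\ldots,k$ and $\Pi_R(\tau)(k+j) = b_j$ for $j=1,\ldots,n-k$. By construction, $\Pi_R(\tau)$ produces zero type-(c) discordant pairs, so $d(\Pi_R(\tau),\tau) = C(\tau,R)$. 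Any other $\sigma \in R$ must differ from $\Pi_R(\tau)$ in its ordering of $B$, so there exist $i<j$ with $\sigma^{-1}(b_i) > \sigma^{-1}(b_j)$, which contributes at least one strictly positive type-(c) discordant pair. Therefore $d(\sigma,\tau) \geq C(\tau,R) + 1 > d(\Pi_R(\tau),\tau)$, establishing that $\Pi_R(\tau)$ is the unique minimiser.

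The argument has no real obstacle; the only point requiring care is the decomposition of Kendall discordances and verifying the invariance of contributions from types (a) and (b) across $R$. Once that bookkeeping is in place, the uniqueness of the minimiser of inversions between two total orders on $B$ (namely, attained exactly when the two orderings coincide) gives the conclusion immediately.
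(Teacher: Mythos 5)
Your proof is correct and follows essentially the same route as the paper's: both decompose the Kendall distance into contributions from pairs within the ranked prefix, mixed pairs, and pairs entirely in the unranked block, observe that only the last contribution depends on $\sigma$, and identify the unique minimiser as the element of $R$ ordering the unranked items in accordance with $\tau$. Your write-up is somewhat more explicit about the uniqueness step (exhibiting a strictly positive discordant pair for any other $\sigma$), which the paper leaves implicit, but the underlying argument is identical.
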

\begin{proof}
	We use the interpretation of the Kendall distance as the number of discordant pairs between two permutations. Let $R$ be the top-$k$ partial ranking given by $x_1 \succ \cdots \succ x_k \succ [n] \setminus \{x_1,\ldots,x_k\}$, and let $X = \{x_1,\ldots, x_k\}$. We decompose the Kendall distance between $\sigma \in R$ and $\tau$ as follows:
	\begin{align}
	d(\sigma, \tau) = \sum_{x,y \in X, x \not= y} \mathbbm{1}_{x,y \text{ discordant for } \sigma, \tau} \nonumber\\ + \sum_{x \in X, y \not\in X} \mathbbm{1}_{x,y \text{ discordant for } \sigma, \tau}\nonumber\\ + \sum_{x,y \not\in X, x \not= y} \mathbbm{1}_{x,y \text{ discordant for } \sigma, \tau}  \, .
	\end{align}
	As $\sigma$ varies in $R$, only some of these terms vary. In particular, it is only the third term that varies with $\sigma$, and it is minimised at $0$ by the permutation $\sigma$ in $R$ which is in accordance with $\tau$ on the set $[n] \setminus X$.
\end{proof}
\begin{definition}
	Let $R \subseteq S_n$ be a top-$k$ partial ranking. Let $\Pi_R : S_n \rightarrow R$ be the map that takes a permutation to the corresponding Kendall-closest permutation in $R$; by Lemma \ref{lem:closestpartialranking}, this is well-defined.
\end{definition}

\begin{lemma}[Decomposition of distances]\label{lem:decomp}
	Let $\sigma \in R$, and $\tau \in S_n$. We have the following decomposition of the distance $d(\sigma, \tau)$:
	\begin{align*}
		d(\sigma, \tau) = d(\sigma, \Pi_R(\tau)) + d(\Pi_R(\tau), \tau) \, .
	\end{align*}
\end{lemma}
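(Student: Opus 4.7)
The plan is to use the interpretation of the Kendall distance as a sum of pairwise discordance indicators, exactly as in the proof of Lemma~\ref{lem:closestpartialranking}. Write $X = \{a_1,\ldots,a_k\}$, and classify every unordered pair $\{x,y\} \subseteq [n]$ into one of three types: (I) both elements in $X$; (II) exactly one element in $X$; (III) neither element in $X$. Then for any two permutations $\pi,\rho \in S_n$, the distance $d(\pi,\rho)$ splits as the sum of discordance counts over these three classes.

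The key structural observation is that membership in $R$ and the definition of $\Pi_R(\tau)$ force the three classes to behave in complementary ways. Since both $\sigma$ and $\Pi_R(\tau)$ lie in $R$, they must both respect the top-$k$ ordering on $X$ and must both place every element of $X$ before every element of $[n]\setminus X$; hence they agree on every Type~I and Type~II pair. On the other hand, by the characterisation of the minimiser extracted from the proof of Lemma~\ref{lem:closestpartialranking}, $\Pi_R(\tau)$ is precisely the element of $R$ that agrees with $\tau$ on every Type~III pair. Consequently $d(\sigma,\Pi_R(\tau))$ consists of exactly the Type~III discordances between $\sigma$ and $\Pi_R(\tau)$, while $d(\Pi_R(\tau),\tau)$ consists of exactly the Type~I and Type~II discordances between $\Pi_R(\tau)$ and $\tau$.

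To conclude, I would check pair by pair that $d(\sigma,\tau)$ splits accordingly: for a Type~III pair, $\Pi_R(\tau)$ and $\tau$ are concordant, so $\sigma$ is discordant with $\tau$ iff $\sigma$ is discordant with $\Pi_R(\tau)$; for a Type~I or Type~II pair, $\sigma$ and $\Pi_R(\tau)$ are concordant, so $\sigma$ is discordant with $\tau$ iff $\Pi_R(\tau)$ is discordant with $\tau$. Summing the indicators over the three classes then yields
\[
d(\sigma,\tau) = d(\sigma,\Pi_R(\tau)) + d(\Pi_R(\tau),\tau),
\]
which is the desired identity.

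I do not anticipate a genuine obstacle here; the only care needed is bookkeeping in the three-way decomposition and a clean statement of the ``agreement'' lemma for permutations in $R$. Geometrically, the identity is really saying that $\Pi_R(\tau)$ is a metric projection of $\tau$ onto $R$ in a setting where the projection satisfies a Pythagoras-type equality rather than a mere triangle inequality, with Type~I/II pairs accounting for the ``perpendicular'' part and Type~III pairs for the ``tangential'' part.
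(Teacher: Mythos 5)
Your proposal is correct and follows essentially the same route as the paper's proof: both decompose the Kendall distance over the three classes of pairs (both, one, or neither element in the ranked set $X$), use membership in $R$ to show $\sigma$ and $\Pi_R(\tau)$ agree on the first two classes, and use the characterisation of $\Pi_R(\tau)$ from Lemma~\ref{lem:closestpartialranking} to show it agrees with $\tau$ on the third. Your explicit pair-by-pair verification at the end is a slightly more careful rendering of the same bookkeeping the paper leaves implicit.
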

\begin{proof}
	We compute directly with the discordant pairs definition of the Kendall distance. Again, let $R$ be the partial ranking $x_1 \succ \cdots \succ x_k$, and let $X = \{x_1,\ldots, x_k\}$. We decompose the Kendall distance between $\sigma \in R$ and $\tau$ as before:
	\begin{align}\label{eq:decomp_sigma_tau}
	d(\sigma, \tau) = \sum_{x,y \in X, x \not= y} \mathbbm{1}_{x,y \text{ discordant for } \sigma, \tau} \nonumber\\+ \sum_{x \in X, y \not\in X} \mathbbm{1}_{x,y \text{ discordant for } \sigma, \tau} \nonumber\\+ \sum_{x,y \not\in X, x \not= y} \mathbbm{1}_{x,y \text{ discordant for } \sigma, \tau}  \, .
	\end{align}
	By the construction of $\Pi_R(\tau)$ in the proof of Lemma \ref{lem:closestpartialranking}, we have that
	\begin{align*}
	d(\Pi_R(\tau), \tau) = \sum_{x,y \in X, x \not= y} \mathbbm{1}_{x,y \text{ discordant for } \sigma, \tau} \nonumber\\+ \sum_{x \in X, y \not\in X} \mathbbm{1}_{x,y \text{ discordant for } \sigma, \tau} \, ,
	\end{align*}
	i.e. the first two terms of the decomposition in Equation \eqref{eq:decomp_sigma_tau}. Similarly, we have
	\begin{align*}
	d(\Pi_R(\tau), \sigma) = \sum_{x,y \not\in X, x \not= y} \mathbbm{1}_{x,y \text{ discordant for } \sigma, \tau} \, ,
	\end{align*}
	and so the result follows.
\end{proof}
\begin{lemma}{\label{lem:partialrelationanti}}
	Let $\sigma \in R$, and let $\tau \in R^\prime$. We have the following relationship between $d(A_{R}(\sigma), \tau)$ and $d(\sigma, \tau)$:
	\begin{align}
	d(A_{R}(\sigma), \tau) = d(\sigma, \tau) + \binom{n-k}{2} - 2d(\sigma, \Pi_R(\tau))\, .
	\end{align}
\end{lemma}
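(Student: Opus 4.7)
The plan is to chain together Lemma \ref{lem:decomp} and Lemma \ref{lem:antitheticdistancepartial} via the pivot point $\Pi_R(\tau)$, which lives in $R$ and therefore allows the antithetic identity to be applied.

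First I would apply Lemma \ref{lem:decomp} twice. The decomposition works for any pair (point in $R$, point in $S_n$), so it gives
\begin{align*}
d(\sigma, \tau) &= d(\sigma, \Pi_R(\tau)) + d(\Pi_R(\tau), \tau) \, , \\
d(A_R(\sigma), \tau) &= d(A_R(\sigma), \Pi_R(\tau)) + d(\Pi_R(\tau), \tau) \, ,
\end{align*}
where the second line uses that $A_R(\sigma) \in R$ by the definition of the antithetic operator. Subtracting, the $d(\Pi_R(\tau), \tau)$ terms cancel and the problem reduces to computing $d(A_R(\sigma), \Pi_R(\tau))$ in terms of $d(\sigma, \Pi_R(\tau))$.

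Next I would apply Lemma \ref{lem:antitheticdistancepartial}. Crucially, both $\sigma$ and $\Pi_R(\tau)$ lie in the top-$k$ set $R$, so the lemma is applicable and yields
\begin{align*}
d(A_R(\sigma), \Pi_R(\tau)) = \binom{n-k}{2} - d(\sigma, \Pi_R(\tau)) \, .
\end{align*}
Substituting this into the expression for $d(A_R(\sigma), \tau)$ and then re-expressing $d(\Pi_R(\tau), \tau) = d(\sigma, \tau) - d(\sigma, \Pi_R(\tau))$ via the first decomposition above gives
\begin{align*}
d(A_R(\sigma), \tau) = d(\sigma, \tau) + \binom{n-k}{2} - 2 d(\sigma, \Pi_R(\tau)) \, ,
\end{align*}
which is the desired identity.

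There is no real obstacle here beyond bookkeeping: all the hard geometric content is already packaged in the previous two lemmas. The only point requiring a moment of care is verifying that each lemma is applied to arguments in the correct sets, specifically that $A_R(\sigma) \in R$ (for Lemma \ref{lem:decomp}) and that the pair $(\sigma, \Pi_R(\tau))$ lies in $R \times R$ (for Lemma \ref{lem:antitheticdistancepartial}); both are immediate from the definitions of $A_R$ and $\Pi_R$. Note also that the hypothesis ``$\tau \in R'$'' in the statement plays no role in the argument—the identity holds for arbitrary $\tau \in S_n$.
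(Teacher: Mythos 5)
Your proof is correct and is essentially identical to the paper's own argument: both apply Lemma \ref{lem:decomp} to $\sigma$ and to $A_R(\sigma)$ with the common pivot $\Pi_R(\tau)$, then invoke Lemma \ref{lem:antitheticdistancepartial} on the pair $(\sigma, \Pi_R(\tau)) \in R \times R$. Your closing observation that the hypothesis $\tau \in R'$ is not actually used is also accurate.
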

\begin{proof}
	We begin by observing that, by Lemma \ref{lem:decomp}, we have
	\begin{align}
	d(\sigma, \tau) = d(\sigma, \Pi_R(\tau)) + d(\Pi_R(\tau), \tau) \, ,
	\end{align}
	and 
	\begin{align}
	d(A_{R}(\sigma), \tau) = d(A_R(\sigma), \Pi_R(\tau)) + d(\Pi_R(\tau), \tau) \, .
	\end{align}
	Now, from Lemma \ref{lem:antitheticdistancepartial}, we have that $d(A_R(\sigma), \Pi_R(\tau)) = \binom{n-k}{2} - d(\sigma, \Pi_R(\tau))$. Hence, the result follows.
\end{proof}

\begin{lemma}\label{lem:unifOnSmallerRanking}
	Let $R, R^\prime \subseteq S_n$ be top-$k$ rankings, in preference notation given by
	\begin{align}
	R:& a_1 \succ \cdots \succ a_l \succ [n] \setminus \{a_1,\ldots,a_l\} \, , \nonumber \\
	R^\prime:& b_1 \succ \cdots \succ b_m \succ [n] \setminus \{b_1,\ldots,b_m\} \nonumber  \, .
	\end{align}
	If $\tau \sim \text{Unif}(R^\prime)$, then $\Pi_R(\tau)$ is a full ranking with distribution $\text{Unif}(R^{\prime\prime})$, where $R^{\prime\prime} \subseteq R$ is the partial ranking given by
	\begin{align}
	R^{\prime\prime}:  a_1 \succ \cdots \succ a_l \succ b_{i_1} \succ \cdots \succ b_{i_q} \nonumber \\ \succ [n] \setminus \{a_1,\ldots,a_l, b_1,\ldots, b_m\} \nonumber  \, ,
	\end{align}
	where $\{b_{i_1},\ldots,b_{i_q}\} = \{b_1,\ldots,b_m\} \setminus \{a_1,\ldots, a_l\}$, and $i_{j} < i_{j+1}$ for all $j=1,\ldots,q-1$.
\end{lemma}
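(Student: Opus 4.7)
The plan is to combine the explicit construction of $\Pi_R$ from the proof of Lemma~\ref{lem:closestpartialranking} with a fiber-counting argument, showing that the pushforward of $\mathrm{Unif}(R')$ under $\Pi_R$ is uniform on $R''$. First I would check that $\Pi_R(R') \subseteq R''$: by Lemma~\ref{lem:closestpartialranking}, $\Pi_R(\tau)$ is the permutation that places $a_1,\ldots,a_l$ in positions $1,\ldots,l$ and then lists the remaining $n-l$ items of $[n]\setminus A$ (where $A := \{a_1,\ldots,a_l\}$) in the same relative order as in $\tau$. Since $\tau \in R'$ starts with $b_1 \succ \cdots \succ b_m$, the relative order within $\tau$ of the elements $B \setminus A = \{b_{i_1},\ldots,b_{i_q}\}$ is exactly $b_{i_1} \succ \cdots \succ b_{i_q}$, and all of these precede every element of $C := [n] \setminus (A \cup B)$. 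Hence $\Pi_R(\tau)$ begins with $a_1 \succ \cdots \succ a_l \succ b_{i_1} \succ \cdots \succ b_{i_q}$, so $\Pi_R(\tau) \in R''$.

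Next, for each $\sigma \in R''$ I would count the fiber $\Pi_R^{-1}(\sigma) \cap R'$. A permutation $\tau \in R'$ is determined by how it arranges the $n-m$ elements of $[n] \setminus B = C \sqcup (A \setminus B)$ in positions $m+1,\ldots,n$, so $|R'| = (n-m)!$. The condition $\Pi_R(\tau) = \sigma$ forces the $C$-elements inside $\tau$ to appear in the unique relative order dictated by $\sigma$, but places no constraint on which of the slots $m+1,\ldots,n$ hold the elements of $A \setminus B$, nor on the relative order among them, since those elements are simply relocated to the top-$l$ prefix of $\Pi_R(\tau)$. A direct count then gives $\binom{n-m}{|C|}\,|A \setminus B|! = (n-m)!/(n-l-q)!$ preimages, which is independent of $\sigma$. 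Since $|R''| = (n-l-q)!$, the fibers partition $R'$ evenly, and the pushforward distribution on $R''$ is therefore uniform.

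The main obstacle is the bookkeeping in the fiber-counting step: one must track carefully how the possibly non-empty intersection $A \cap B$ (which sits in the top-$m$ prefix of $\tau$ but then gets moved to the top-$l$ prefix of $\Pi_R(\tau)$) interacts with both the positions of $A \setminus B$ inside $\tau$ and the requirement that the induced relative order on $[n] \setminus A$ matches $\sigma$. The observation that unblocks this is that $\Pi_R(\tau)$ depends on $\tau$ only through the relative order of $[n] \setminus A$ in $\tau$; with this in mind, both the fiber size and its independence from $\sigma$ reduce to a routine combinatorial calculation.
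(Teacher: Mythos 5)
Your proposal is correct and follows essentially the same two-step strategy as the paper: first show $\Pi_R(R') \subseteq R''$ using the fact that $\Pi_R(\tau)$ preserves the relative order of $[n]\setminus\{a_1,\ldots,a_l\}$ from $\tau$, then show the fibers $\Pi_R^{-1}(\sigma)\cap R'$ all have the same size. The only difference is that you compute the fiber cardinality explicitly as $\binom{n-m}{|C|}\,|A\setminus B|!$ (and check it against $|R'|$ and $|R''|$), whereas the paper merely observes that the pre-image count is independent of the chosen $\sigma\in R''$; your version is a harmless, slightly more explicit rendering of the same argument.
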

\begin{proof}
	We first show that $\Pi_R$ maps $R^\prime$ into $R^{\prime\prime}$. This is straightforward, as given $\tau \in R^\prime$, we first observe that $\Pi_R(\tau) \in R$, and so the full ranking $\Pi_R(\tau)$ is consistent with the partial ranking
	\begin{align}
		a_1 \succ \cdots \succ a_l \succ [n] \setminus \{a_1,\ldots,a_l\} \nonumber  \, .
	\end{align}
	Next, since $\Pi_R(\tau)$ is concordant with $\tau$ for all pairs outside the set $\{a_1,\ldots, a_l\}$, $\Pi_R(\tau)$ must be consistent with the partial ranking
	\begin{align}
	b_{i_1} \succ \cdots \succ b_{i_q} \succ [n] \setminus \{a_1,\ldots,a_l, b_1,\ldots, b_m\}  \nonumber \, .
	\end{align}
	Putting these two facts together shows that the full ranking $\Pi_R(\tau)$ must be consistent with the partial ranking
	\begin{align}
	a_1 \succ \cdots \succ a_l \succ b_{i_1} \succ \cdots \succ b_{i_q} \nonumber \\ \succ [n] \setminus \{a_1,\ldots,a_l, b_1,\ldots, b_m\}  \nonumber \, .
	\end{align}
	Thus, given $\tau \sim \text{Unif}(R^\prime)$, the distribution of $\Pi_R(\tau)$ is supported on $R^{\prime\prime}$. To show that it is uniform, we now argue that equally many rankings in $R^\prime$ are mapped to each ranking in $R^{\prime\prime}$. To see this, we observe that the pre-image of a ranking in $R^{\prime\prime}$ is the set of all rankings in $R^{\prime}$ which are concordant with it on all pairs in $[n] \setminus \{a_1,\ldots,a_l, b_1,\ldots, b_m\}$. The number of such rankings is independent of the selected ranking in $R^{\prime\prime}$, and so the statement of the lemma follows.
\end{proof}

Having introduced the antithetic operator for a top-$k$ partial ranking $R$, $A_R : R \rightarrow R$ and the projection map $\Pi_R : S_n \rightarrow R$, we next study how these operations interact with one another.
\begin{lemma}
	Let $R^{\prime\prime} \subseteq R \subseteq S_n$ be top-$k$ partial rankings. Then for $\sigma \in R$, we have
	\begin{align}
	A_{R^{\prime\prime}}(\Pi_{R^{\prime\prime}}(\sigma)) = \Pi_{R^{\prime\prime}}( A_{R}(\sigma) )  \nonumber \, .
	\end{align}
\end{lemma}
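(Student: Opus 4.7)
The plan is to exploit the explicit closed-form descriptions of both operators — the antithetic map from Remark \ref{rem:topkantithetic} and the projection map from the proof of Lemma \ref{lem:closestpartialranking} — and verify the equality by computing both sides position by position.

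First, I would fix notation to reflect the nesting $R^{\prime\prime} \subseteq R$. Writing $R$ as the top-$l$ partial ranking $a_1 \succ \cdots \succ a_l \succ [n]\setminus\{a_1,\ldots,a_l\}$, the containment forces $R^{\prime\prime}$ to refine $R$, so $R^{\prime\prime}$ has the form $a_1 \succ \cdots \succ a_l \succ c_1 \succ \cdots \succ c_m \succ [n]\setminus\{a_1,\ldots,a_l,c_1,\ldots,c_m\}$ for some distinct items $c_1,\ldots,c_m$. Let $Y = [n]\setminus\{a_1,\ldots,a_l,c_1,\ldots,c_m\}$ denote the items left unranked by $R^{\prime\prime}$.

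Second, I would recall the explicit forms of each map. For $\sigma\in R$, Remark \ref{rem:topkantithetic} gives $A_R(\sigma)(i)=a_i$ for $i\le l$ and $A_R(\sigma)(l+j)=\sigma(n+1-j)$ for $j=1,\ldots,n-l$; the analogous formula holds for $A_{R^{\prime\prime}}$ with $l$ replaced by $l+m$. The proof of Lemma \ref{lem:closestpartialranking} (together with Lemma \ref{lem:unifOnSmallerRanking}) shows that for any $\tau\in R$, $\Pi_{R^{\prime\prime}}(\tau)$ is the unique permutation in $R^{\prime\prime}$ whose restriction to $Y$-positions preserves the relative order of $Y$-items inherited from $\tau$; equivalently, it places $a_1,\ldots,a_l,c_1,\ldots,c_m$ in positions $1,\ldots,l+m$ and then lists the remaining items of $Y$ in positions $l+m+1,\ldots,n$ in their induced $\tau$-order.

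Third, I would compute both sides. For $A_{R^{\prime\prime}}(\Pi_{R^{\prime\prime}}(\sigma))$: the projection step produces a permutation whose first $l+m$ positions are $a_1,\ldots,a_l,c_1,\ldots,c_m$ and whose last $n-l-m$ positions list $Y$ in its $\sigma$-order; then $A_{R^{\prime\prime}}$ reverses exactly those last $n-l-m$ positions. For $\Pi_{R^{\prime\prime}}(A_R(\sigma))$: $A_R$ fixes $a_1,\ldots,a_l$ in the first $l$ positions and reverses the order of the remaining $n-l$ items (which include both the $c_i$'s and $Y$), so in particular the relative order of $Y$-items in $A_R(\sigma)$ is the reverse of their relative order in $\sigma$; projecting then places $a_1,\ldots,a_l,c_1,\ldots,c_m$ in the first $l+m$ positions and writes the $Y$-items in positions $l+m+1,\ldots,n$ in the order inherited from $A_R(\sigma)$, i.e.\ in reverse $\sigma$-order. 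The two permutations agree on every position, which gives the claimed identity.

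The computations themselves are essentially bookkeeping, so the main obstacle is conceptual: one must see clearly that $A_R$'s reversal of the last $n-l$ positions, although it scrambles the $c_i$'s among the $Y$-items, induces \emph{exactly} the same relative reversal on $Y$ as $A_{R^{\prime\prime}}$ would apply after projection — and that $\Pi_{R^{\prime\prime}}$ then reshuffles the $c_i$'s back into their $R^{\prime\prime}$-prescribed order, cancelling any apparent discrepancy. Once this is articulated, the equality at each position follows directly from the explicit formulas above.
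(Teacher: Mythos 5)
Your proposal is correct and follows essentially the same route as the paper's proof: write out the explicit closed forms of $A_R$, $A_{R^{\prime\prime}}$ and $\Pi_{R^{\prime\prime}}$ and verify that both sides produce the ranking with $a_1,\ldots,a_l,c_1,\ldots,c_m$ first and the $Y$-items in reversed $\sigma$-order. If anything, you are more careful than the paper, which writes $\sigma$ as if it already ranked all of $R^{\prime\prime}$'s ranked items first and leaves the interleaved case to ``a straightforward calculation,'' whereas you explicitly argue that $A_R$'s reversal induces exactly the reversal of the relative $Y$-order and that the projection restores the $c_i$'s.
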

\begin{proof}
	We begin by introducing preference-style notation for $R$ and $R^{\prime\prime}$. Let $R$ be the top-$k$ ranking given by $a_1 \succ \cdots \succ a_l \succ [n] \setminus \{a_1,\ldots,a_l\}$, and let $R^{\prime\prime}$ be the partial ranking given by $a_1 \succ \cdots \succ a_l \succ a_{l+1} \succ \cdots \succ a_m \succ [n] \setminus \{a_1,\ldots,a_m\}$. Let $\sigma \in R$, and let the elements of $[n] \setminus \{a_1,\ldots,a_m\}$ be given by $b_1,\ldots,b_q$, with indices chosen such that $\sigma$ corresponds to the full ranking
	\begin{align}
	a_1 \succ \cdots a_m \succ b_1 \succ \cdots \succ b_q  \nonumber \, .
	\end{align}
	Then, the ranking $A_{R^{\prime\prime}}(\Pi_{R^{\prime\prime}}(\sigma))$ is given by
	\begin{align}
	a_1 \succ \cdots a_m \succ b_q \succ \cdots \succ b_1  \nonumber \, ,
	\end{align}
	and a straightforward calculation shows that this is also the case for $\Pi_{R^{\prime\prime}}(A_R(\sigma))$, as required.
\end{proof}
Finally, the last Lemma states the most general identity for a distance, which involves the antithetic operator, the closest element map given a partial rankings set $R$ and a subset of it, denoted by $R''$.
\begin{lemma}\label{lem:projAndAnti}
	Let $R^{\prime\prime} \subseteq R \subseteq S_n$ be top-$k$ partial rankings, given in preference notation by
	\begin{align}
	&R: a_1 \succ \cdots \succ a_l \succ [n] \setminus \{a_1, \ldots, a_l\} \, , \nonumber \\
	&R^{\prime\prime}: a_1 \succ \cdots \succ a_l \succ a_{l+1} \succ \cdots a_m \succ [n] \setminus \{a_1, \ldots, a_m\} \nonumber  \, .
	\end{align}
	Let $\alpha$ be the number of unranked elements under $R$, and let $\beta$ be the additional number of elements ranked under $R^{\prime\prime}$ relative to $R$. Then for $\sigma \in R$, we have
	\begin{align}
		d(\sigma, \Pi_{R^{\prime\prime}}(\sigma)) = ((n-l) - (m-l))(m-l) \nonumber \\ + \binom{m-l}{2} -  d(A_{R}(\sigma), \Pi_{R^{\prime\prime}}(A_R(\sigma))) \nonumber  \, .
	\end{align}
\end{lemma}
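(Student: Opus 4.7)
The plan is to work throughout with the ``discordant pairs'' interpretation of the Kendall distance, as in Lemmas~\ref{lem:closestpartialranking}--\ref{lem:partialrelationanti}. First I would identify exactly which pairs $\{x,y\} \subseteq [n]$ can contribute to either of the distances appearing in the claimed identity. Since $\sigma \in R$ and $\Pi_{R^{\prime\prime}}(\sigma) \in R^{\prime\prime} \subseteq R$, both permutations place $a_1,\ldots,a_l$ in the top $l$ positions in the same order, so any pair involving an element of $\{a_1,\ldots,a_l\}$ is automatically concordant for $(\sigma, \Pi_{R^{\prime\prime}}(\sigma))$. Moreover, by the explicit construction of the projection in the proof of Lemma~\ref{lem:closestpartialranking}, $\Pi_{R^{\prime\prime}}(\sigma)$ agrees with $\sigma$ on the relative order of every pair inside $[n] \setminus \{a_1,\ldots,a_m\}$. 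Hence the pairs that can possibly be discordant split into (a) pairs with both elements in $\{a_{l+1},\ldots,a_m\}$, and (b) cross-pairs with one element in $\{a_{l+1},\ldots,a_m\}$ and the other in $[n] \setminus \{a_1,\ldots,a_m\}$.

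Write $\beta = m-l$ and $\gamma = n-m$. Within $\{a_{l+1},\ldots,a_m\}$ the permutation $\Pi_{R^{\prime\prime}}(\sigma)$ imposes the canonical order $a_{l+1} \succ \cdots \succ a_m$, so the type-(a) discordant count is the number $D_1(\sigma)$ of inversions of $\sigma$ relative to that order. For type (b), $\Pi_{R^{\prime\prime}}(\sigma)$ places every element of $\{a_{l+1},\ldots,a_m\}$ before every element of $[n] \setminus \{a_1,\ldots,a_m\}$, so the type-(b) count $D_2(\sigma)$ equals the number of cross-pairs where an element of $[n]\setminus\{a_1,\ldots,a_m\}$ precedes an element of $\{a_{l+1},\ldots,a_m\}$ under $\sigma$. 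Thus $d(\sigma, \Pi_{R^{\prime\prime}}(\sigma)) = D_1(\sigma) + D_2(\sigma)$, and an identical decomposition applies to $d(A_R(\sigma), \Pi_{R^{\prime\prime}}(A_R(\sigma)))$.

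The key step is then that, by Remark~\ref{rem:topkantithetic}, $A_R$ reverses the order of positions $l+1,\ldots,n$, hence flips every ordered pair among the elements in those positions. Since the total number of unordered pairs within $\{a_{l+1},\ldots,a_m\}$ is $\binom{\beta}{2}$ and the total number of unordered cross-pairs of type (b) is $\beta\gamma$, this yields $D_1(A_R(\sigma)) = \binom{\beta}{2} - D_1(\sigma)$ and $D_2(A_R(\sigma)) = \beta\gamma - D_2(\sigma)$. Adding the two decomposed distances gives
$$d(\sigma, \Pi_{R^{\prime\prime}}(\sigma)) + d(A_R(\sigma), \Pi_{R^{\prime\prime}}(A_R(\sigma))) = \binom{\beta}{2} + \beta\gamma,$$
and substituting $\beta = m-l$ and $\gamma = (n-l) - (m-l)$ yields exactly the stated identity.

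The only genuinely delicate part is the bookkeeping of which pairs are forced to be concordant: it is essential that $\Pi_{R^{\prime\prime}}$ agrees with its argument on every pair outside $\{a_1,\ldots,a_m\}$, because otherwise the reversal cancellations between $\sigma$ and $A_R(\sigma)$ would leak into terms outside categories (a) and (b) and the clean complementarity would fail. Once that agreement is invoked from Lemma~\ref{lem:closestpartialranking}, what remains is just a straightforward pair-reversal count on the restricted ground set $\{a_{l+1},\ldots,a_m\} \cup ([n]\setminus\{a_1,\ldots,a_m\})$.
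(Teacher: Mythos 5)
Your proposal is correct and follows essentially the same route as the paper's proof: decompose $d(\sigma,\Pi_{R^{\prime\prime}}(\sigma))$ into discordant pairs within $\{a_{l+1},\ldots,a_m\}$ and cross-pairs with $[n]\setminus\{a_1,\ldots,a_m\}$, then use the fact that $A_R$ reverses all relative orders in positions $l+1,\ldots,n$ to conclude that the two counts for $\sigma$ and $A_R(\sigma)$ sum to $\binom{m-l}{2}$ and $(n-m)(m-l)$ respectively. Your version is, if anything, slightly more explicit than the paper's about why pairs involving $\{a_1,\ldots,a_l\}$ or lying entirely in $[n]\setminus\{a_1,\ldots,a_m\}$ contribute nothing.
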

\begin{proof}
	Again, we denote $\{b_1, \ldots, b_q\} = [n] \setminus \{a_1,\ldots,a_m\}$, with indices chosen such that $\sigma$ corresponds to the full ranking $
	a_1 \succ \cdots \succ a_m \succ b_1 \succ \cdots \succ b_q$. 
	From earlier arguments, we have
	\begin{align}
	d(\sigma, \Pi_{R^{\prime\prime}}(\sigma)) = \sum_{\substack{x \in \{a_{l+1}, \cdots, a_m\} \\ y \in \{a_{l+1},\ldots, a_m\}} } \mathbbm{1}_{(x,y)\text{ discordant for }  \sigma, \Pi_{R^{\prime\prime}}(\sigma)}  \nonumber \\+  \sum_{\substack{x \in \{a_{l+1}, \cdots, a_m\} \\ y \in \{b_1,\ldots, b_q\}} } \mathbbm{1}_{(x,y)\text{ discordant for }  \sigma, \Pi_{R^{\prime\prime}}(\sigma)}  \nonumber \, .
	\end{align}
	Now observe that for $a_i, a_j$ with $l+1 \leq i < j \leq m$, this pair is discordant for the pair of rankings $\sigma, \Pi_{R^{\prime\prime}}(\sigma)$ iff $a_j \succ a_i$ under $\sigma$ iff $a_i \succ a_j$ w.r.t $A_R(\sigma)$ iff $a_i, a_j$ are concordant for the pair of rankings $A_R(\sigma), \Pi_{R^{\prime\prime}}(A_R(\sigma))$. Hence, we have
	\begin{align}
	 \sum_{\substack{x \in \{a_{l+1}, \cdots, a_m\} \\ y \in \{a_{l+1},\ldots, a_m\}} } \mathbbm{1}_{(x,y)\text{ discordant for }  \sigma, \Pi_{R^{\prime\prime}}(\sigma)}  \nonumber \\+  \sum_{\substack{x \in \{a_{l+1}, \cdots, a_m\} \\ y \in \{a_{l+1},\ldots, a_m\}} } \mathbbm{1}_{(x,y)\text{ discordant for }  A_R(\sigma), \Pi_{R^{\prime\prime}}(A_R(\sigma))}	 \nonumber \\ = \binom{\beta}{2}  \nonumber \, .
	\end{align}
	By analogous reasoning, we have 
	\begin{align}
	\sum_{\substack{x \in \{a_{l+1}, \cdots, a_m\} \\ y \in \{b_1,\ldots, b_q\}} } \mathbbm{1}_{(x,y)\text{ discordant for }  \sigma, \Pi_{R^{\prime\prime}}(\sigma)}  \nonumber \\+ \sum_{\substack{x \in \{a_{l+1}, \cdots, a_m\} \\ y \in \{b_1,\ldots, b_q\}} } \mathbbm{1}_{(x,y)\text{ discordant for }  A_R(\sigma), \Pi_{R^{\prime\prime}}(A_R(\sigma))}  \nonumber \\= (\alpha - \beta)\beta  \nonumber \, .
	\end{align}
	Altogether, these statements yield the result of the lemma.
\end{proof}

\begin{proof}{of Proposition~\ref{prop:negcov}}
\noindent \textbf{Case:} $\sigma_0\in S_n$ be the fixed permutation, then
\begin{align}
\displaystyle\text{Cov}\left(d(\sigma,\sigma_0), d(\anti{\sigma}{R},\sigma_0)\right)&<0\nonumber.
\end{align}
This holds true since

\noindent $d(\anti{\sigma}{R},\sigma_0)={n \choose 2}-d(\sigma,\sigma_0), \forall \sigma\in S_n$, $\forall n \in \mathbb{N}$ by Lemma~\ref{lem:antitheticfull}. 
\noindent \textbf{Case} $\mathbf{\emptyset \subset R}$: Let $\sigma_0\in R$ we have that

\noindent $d(\anti{\sigma}{R},\sigma_0)={n-k\choose 2}$
$-d(\sigma,\sigma_0)$  $\forall \sigma_0\in R$
by Lemma~\ref{lem:antitheticdistancepartial}. 
\end{proof}

 In general, if $\sigma_0\notin R$, by Lemma~\ref{lem:partialrelationanti}, $d(\anti{\sigma}{R},\sigma_0)=$
 
\noindent$d(\sigma, \sigma_0) + \binom{n-k}{2} - 2d(\sigma, \Pi_{R_i}(\sigma_0))$.

After proving all the relevant Lemmas, we now present our main result regarding antithetic samples, namely, that this scheme provides negatively correlated pairs of samples.

\begin{theorem}\label{varantiker}
 	Let the antithetic kernel estimator be evaluated at a pair of partial rankings $R_i, R_j$
where $(\sigma_n)_{n=1}^N \sim \text{Unif}(R_i)$, $(\tau_m)_{m=1}^M \sim \text{Unif}(R_j)$, $N,M$ are the number of \emph{pairs} of samples. If we have $\widetilde{\sigma}_n = A_{R_i}(\sigma_n)$ and $\widetilde{\tau}_m = A_{R_j}(\tau_m)$ for all $m,n$, it corresponds to the antithetic case. If we have $(\widetilde{\sigma}_n)_{n=1}^N \sim \text{Unif}(R_i)$, $(\widetilde{\tau}_m)_{m=1}^M \sim \text{Unif}(R_j)$ independently, it corresponds to the i.i.d. case. Then, the asymptotic variance of the estimator from Equation~\eqref{antikerest} is lower in the antithetic case than in the i.i.d. case.
 \end{theorem}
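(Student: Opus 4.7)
The plan is to (a) use the law of total variance to reduce the asymptotic-variance comparison to two specific covariance non-positivity claims, and then (b) verify those claims using the distance-decomposition lemmas established above. For (a), write $T_{nm}$ for the bracketed four-term sum inside Equation~\eqref{antikerest}. Since $T_{nm}$ and $T_{n'm'}$ are independent whenever $n\neq n'$ and $m\neq m'$, a direct expansion yields
\[
\mathrm{Var}(\widehat{K}(R_i, R_j)) = \frac{\mathrm{Var}(T_{11})}{16NM} + \frac{(M-1)\,\mathrm{Cov}(T_{11},T_{12})}{16NM} + \frac{(N-1)\,\mathrm{Cov}(T_{11},T_{21})}{16NM} \, ,
\]
so the diagonal term is $O(1/(NM))$ and lower-order. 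Define $g(\sigma) := \mathbb{E}_\tau K(\sigma, \tau)$ and $h(\tau) := \mathbb{E}_\sigma K(\sigma, \tau)$. Lemma~\ref{lem:antithetic-marginal} implies $\mathbb{E}[T_{nm} \mid \sigma_n, \widetilde{\sigma}_n] = 2[g(\sigma_n) + g(\widetilde{\sigma}_n)]$, and the law of total covariance then gives $\mathrm{Cov}(T_{11},T_{12}) = 8\bigl[\mathrm{Var}(g(\sigma)) + \mathrm{Cov}(g(\sigma), g(\widetilde{\sigma}))\bigr]$, with an analogous expression for $\mathrm{Cov}(T_{11},T_{21})$ in terms of $h$. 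In the i.i.d.\ case $\mathrm{Cov}(g(\sigma), g(\widetilde{\sigma}))=0$, whereas in the antithetic case it equals $\mathrm{Cov}(g(\sigma), g(A_{R_i}(\sigma)))$. Subtracting, the leading-order antithetic-minus-i.i.d.\ difference equals
\[
\frac{\mathrm{Cov}(g(\sigma), g(A_{R_i}(\sigma)))}{2N} + \frac{\mathrm{Cov}(h(\tau), h(A_{R_j}(\tau)))}{2M} \, ,
\]
so the theorem reduces to showing both covariances are non-positive.

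For (b), focus on $\mathrm{Cov}(g(\sigma), g(A_{R_i}(\sigma)))$; the other case is symmetric. Using Lemma~\ref{lem:decomp} to split $d(\sigma,\tau) = d(\sigma, \Pi_{R_i}(\tau)) + d(\Pi_{R_i}(\tau), \tau)$, and grouping $\tau\in R_j$ by the value $\tau^\ast := \Pi_{R_i}(\tau) \in R_i''$ (which by Lemma~\ref{lem:unifOnSmallerRanking} is uniform on a top-$l$ subranking of $R_i$), the Mallows exponential factorises and yields
\[
g(\sigma) = \sum_{\tau^\ast \in R_i''} w(\tau^\ast)\,\exp(-\nu\, d(\sigma, \tau^\ast))
\]
with non-negative weights $w(\tau^\ast)$. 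Since each $\tau^\ast \in R_i$, Lemma~\ref{lem:antitheticdistancepartial} gives $d(A_{R_i}(\sigma), \tau^\ast) = \binom{n-k}{2} - d(\sigma, \tau^\ast)$, whence
\[
g(A_{R_i}(\sigma)) = C\sum_{\tau^\ast \in R_i''} w(\tau^\ast)\,\exp(+\nu\, d(\sigma,\tau^\ast)) \, ,
\]
with $C := \exp(-\nu\binom{n-k}{2})$.

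To finish, decompose $g$ under the involution $A_{R_i}$ as $g_\pm(\sigma) := [g(\sigma) \pm g(A_{R_i}(\sigma))]/2$. Since $A_{R_i}$ preserves the uniform distribution on $R_i$, the odd part $g_-$ is mean-zero and a direct expansion yields
\[
\mathrm{Cov}(g(\sigma), g(A_{R_i}(\sigma))) = \mathrm{Var}(g_+) - \mathrm{Var}(g_-) \, ,
\]
so the required inequality reduces to $\mathrm{Var}(g_-) \geq \mathrm{Var}(g_+)$. In the expansions above, $g_-$ is a non-negative combination of the strictly monotone maps $\exp(-\nu x) - C\exp(+\nu x)$ evaluated at $x = d(\sigma, \tau^\ast)$, while each summand of $g_+$ is symmetric about the midpoint of $[0, \binom{n-k}{2}]$. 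The main obstacle is this last dominance step: a pointwise application of Corollary~\ref{cor:hfunc} does not suffice, because distances from a uniform $\sigma\in R_i$ to two fixed elements of $R_i''$ need not be positively correlated (in the extreme case of an antithetic pair in $R_i''$ they are perfectly negatively correlated). To overcome this I would combine Lemma~\ref{lem:projAndAnti} with the commutativity $A_{R_i''}(\Pi_{R_i''}(\sigma)) = \Pi_{R_i''}(A_{R_i}(\sigma))$ established above to pair up the contributions from $\sigma$ and $A_{R_i}(\sigma)$ and show that the antisymmetric fluctuations dominate the symmetric ones in the Mallows exponential.
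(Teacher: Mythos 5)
Your part (a) is correct and is essentially the paper's own reduction in cleaner language: the paper expands the second moment, sorts the summands into terms with $n\neq n'$, $m\neq m'$ (identical in law in both cases), ``cubic'' terms sharing one index, and ``quadratic'' diagonal terms, and notes that only the cubic terms matter asymptotically. Your law-of-total-covariance computation, giving a leading-order difference of $\mathrm{Cov}(g(\sigma), g(A_{R_i}(\sigma)))/(2N) + \mathrm{Cov}(h(\tau), h(A_{R_j}(\tau)))/(2M)$, identifies exactly the same quantity: $\mathbb{E}[g(\sigma)g(A_{R_i}(\sigma))]$ is the paper's term $\mathbb{E}[K(\sigma_n,\tau_m)K(\widetilde{\sigma}_n,\tau_{m'})]$ up to the additive constant $\mathbb{E}[g]^2$. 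So the reduction is sound and faithful to the paper.

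The genuine gap is in part (b). You correctly reduce the problem to $\mathrm{Var}(g_-)\geq\mathrm{Var}(g_+)$ for the odd/even parts of $g$ under the involution $A_{R_i}$, and you correctly diagnose that Corollary~\ref{cor:hfunc} cannot be applied termwise because the distances $d(\sigma,\tau^\ast)$ to distinct $\tau^\ast\in R_i''$ are not positively associated. But at that point you stop: the sentence beginning ``To overcome this I would combine Lemma~\ref{lem:projAndAnti} with the commutativity\ldots'' is a plan, not an argument, and the inequality $\mathrm{Var}(g_-)\geq\mathrm{Var}(g_+)$ is precisely the substance of the theorem --- it does not follow from the lemmas as stated. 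The paper closes this step by a different device: it keeps the cubic term in the joint form $\mathbb{E}[\exp(-\lambda(d(\sigma,\tau)+d(\widetilde{\sigma},\tau')))]$ and decomposes \emph{both} distances via Lemmas~\ref{lem:decomp} and~\ref{lem:unifOnSmallerRanking} into pieces ($d(\Pi_{R_1}(\tau),\tau)$, $d(\Pi_{R_3}(\sigma),\Pi_{R_1}(\tau))$, $d(\sigma,\Pi_{R_3}(\sigma))$, and their primed analogues) that are mutually independent and, with one exception, identically distributed in the two coupling schemes. The exception is the pair $d(\sigma,\Pi_{R_3}(\sigma))$, $d(\widetilde{\sigma},\Pi_{R_3}(\widetilde{\sigma}))$: i.i.d.\ in one case, while Lemma~\ref{lem:projAndAnti} forces their sum to be the deterministic constant $((n-l)-(m-l))(m-l)+\binom{m-l}{2}$ (its i.i.d.\ mean) in the antithetic case. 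Conditioning on everything else and applying Jensen's inequality to the strictly convex map $x\mapsto\exp(-\lambda x)$ then gives the strict second-moment reduction whenever $R_1\neq R_3$. If you want to salvage your even/odd route you would need to prove the variance-dominance inequality directly, which I do not see how to do without effectively reproducing the paper's conditioning argument; as written, the proposal is incomplete at its decisive step.
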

\begin{proof}
 It has been shown previously that the antithetic kernel estimator is unbiased (in the off-diagonal case), so showing that it has lower MSE in the antithetic case is equivalent to showing that its second moment is smaller in the antithetic case than in the i.i.d. case. The second moment is given by
\begin{gather*}
\mathbb{E}\bigl[\widehat{K}(R_i, R_j)^2\bigr] \\
= \mathbb{E}\Bigl[ \bigl(\frac{1}{4NM}\sum_{n=1}^N \sum_{m=1}^M \bigl( K(\sigma_n, \tau_m) \\+ K(\widetilde{\sigma}_n, \tau_m) +K(\sigma_n, \widetilde{\tau}_m) + K(\widetilde{\sigma}_n, \widetilde{\tau}_m) \bigr)\bigr)^2 \Bigr] \\
= \frac{1}{16M^2N^2} \sum_{n,n^\prime=1}^N \sum_{m,m^\prime=1}^M \mathbb{E}\Bigl[ \bigl( K(\sigma_n, \tau_m) + K(\widetilde{\sigma}_n, \tau_m) \\+K(\sigma_n, \widetilde{\tau}_m) + K(\widetilde{\sigma}_n, \widetilde{\tau}_m)\bigr) \times 
  \bigl( K(\sigma_{n^\prime}, \tau_{m^\prime}) + \\ K(\widetilde{\sigma}_{n^\prime}, \tau_{m^\prime})+K(\sigma_{n^\prime}, \widetilde{\tau}_{m^\prime}) + K(\widetilde{\sigma}_{n^\prime}, \widetilde{\tau}_{m^\prime}) \bigr) \Bigr]\ \, .
\end{gather*}
We identify three types of terms in the above sum: (i) those where $n \not= n^\prime$ and $m \not= m^\prime$; (ii) those where $n=n^\prime$ but $m\not=m^\prime$, or $m=m^\prime$ but $n\not=n^\prime$; (iii) those where $n=n^\prime$ and $m=m^\prime$.

We remark that in case (i), the 16 terms that appear in the summand all have the same distribution in the antithetic and i.i.d. case, so terms of the form (i) contribute no difference between antithetic and i.i.d.. There are $\mathcal{O}(N^2 M + M^2 N)$ terms of the form (ii), and $\mathcal{O}(NM)$ terms of the form (iii). We thus refer to terms of the form (ii) as cubic terms, and terms of the form (iii) as quadratic terms. We observe that due to the proportion of cubic terms to quadratic terms diverging as $N,M \rightarrow \infty$, it is sufficient to prove that each cubic term is less in the antithetic case than the i.i.d. case to establish the claim of lower MSE.

Thus, we focus on cubic terms. Let us consider a term with $n=n^\prime$ and $m\not=m^\prime$. The term has the form
\begin{gather}
	\mathbb{E}\Bigg\lbrack  \bigg( K(\sigma_n, \tau_m) + K(\widetilde{\sigma}_n, \tau_m) + K(\sigma_n, \widetilde{\tau}_m) + K(\widetilde{\sigma}_n, \widetilde{\tau}_m) \bigg) \times \nonumber \\
	 \bigg( K(\sigma_{n}, \tau_{m^\prime}) + K(\widetilde{\sigma}_{n}, \tau_{m^\prime}) + K(\sigma_{n}, \widetilde{\tau}_{m^\prime}) + K(\widetilde{\sigma}_{n}, \widetilde{\tau}_{m^\prime}) \bigg) \Bigg\rbrack \nonumber \, .
\end{gather}
Of the sixteen terms appearing in the expectation above, there are only two distinct distributions they may have. The two types of terms are given below:
\begin{align}\label{eq:term1}
\mathbb{E}\left\lbrack K(\sigma_n, \tau_m) K(\sigma_n, \tau_{m^\prime}) \right\rbrack \, ,
\end{align}
and
\begin{align}\label{eq:term2}
\mathbb{E}\left\lbrack K(\sigma_n, \tau_m) K(\widetilde{\sigma}_n, \tau_{m^\prime}) \right\rbrack \, .
\end{align}
Terms of the form in Equation \eqref{eq:term1} have the same distribution in the antithetic and i.i.d. cases, so we can ignore these. However, terms of the form in Equation \eqref{eq:term2} have differing distributions in these two cases, so we focus in on these. We deal specifically with the case where $K(\sigma, \tau) = \exp(-\lambda d(\sigma, \tau))$, so we may rewrite the expression in Equation \eqref{eq:term2} as
\begin{align}\label{eq:term2dist}
\mathbb{E}\left\lbrack \exp(-\lambda(d(\sigma_n, \tau_m) +d(\widetilde{\sigma}_n, \tau_{m^\prime}))) \right\rbrack \, .
\end{align}
We now decompose the distances $d(\sigma_n, \tau_m)$, $d(\widetilde{\sigma}_n, \tau_{m^\prime})$ using the series of lemmas introduced before. First, we use Lemma \ref{lem:decomp} to write 
\begin{align}
	&d(\sigma_n, \tau_m) = d(\sigma_n, \Pi_{R_1}(\tau_m)) + d(\Pi_{R_1}(\tau_m), \tau_m) \, , \nonumber \\
	&d(\widetilde{\sigma}_n, \tau_{m^\prime}) = d(\widetilde{\sigma}_n, \Pi_{R_1}(\tau_{m^\prime})) + d(\Pi_{R_1}(\tau_{m^\prime}), \tau_{m^\prime})\label{eq:firstdecomposition} \, .
\end{align}
We give a small example illustrating some of the variables at play in this decomposition in Figure \ref{fig:proofillustration}.

\begin{figure}
	\centering
	\includegraphics[keepaspectratio, width=.4\textwidth]{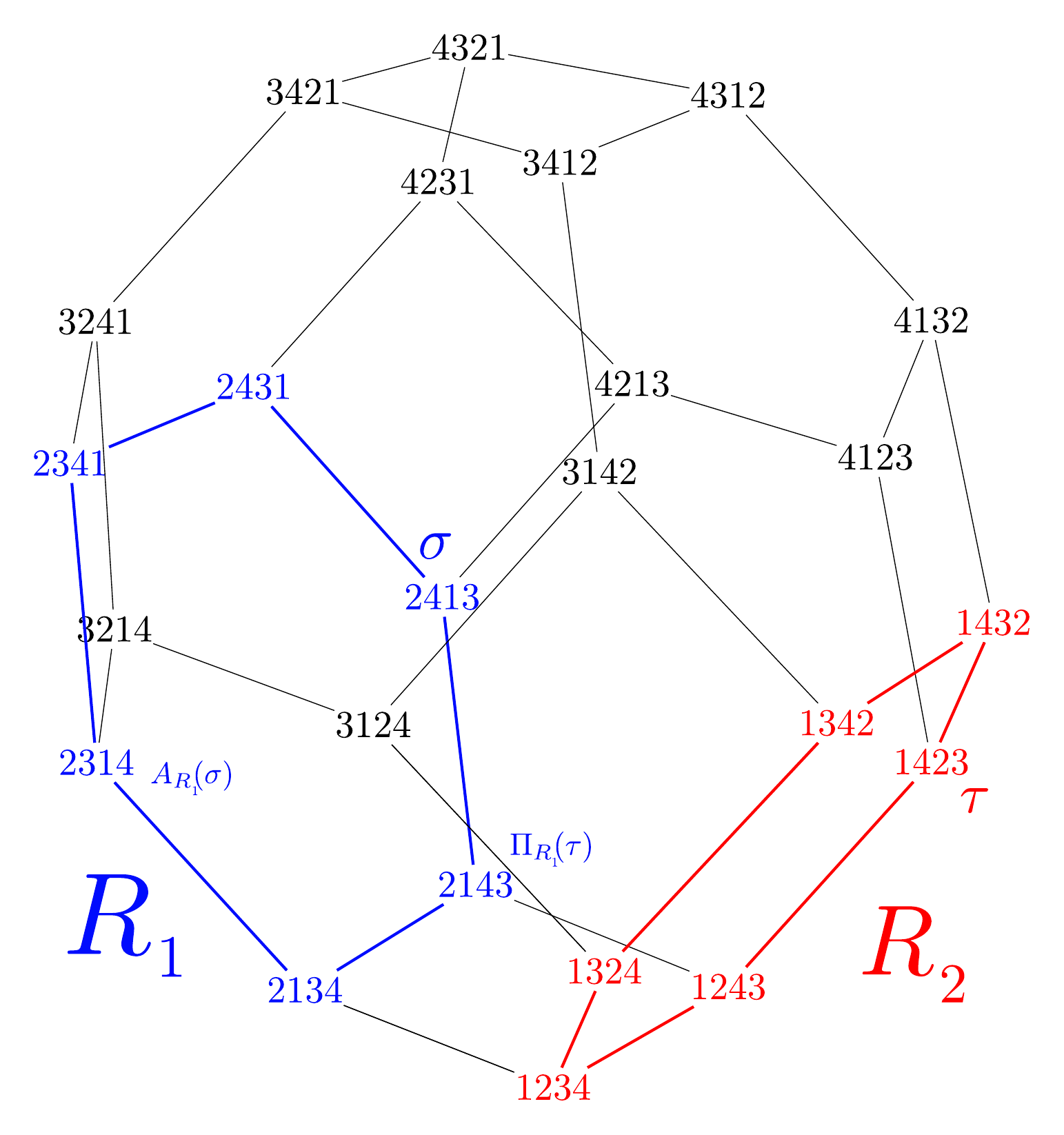}
	\caption{An example of the variables appearing in the decomposition in Equation \eqref{eq:firstdecomposition}.}
	\label{fig:proofillustration}
\end{figure}

Now, writing $R_3 \subseteq R_1$ for the partial ranking described by Lemma \ref{lem:unifOnSmallerRanking}, we have that $\Pi_{R_1}(\tau_m), \Pi_{R_1}(\tau_{m^\prime}) \overset{\mathrm{i.i.d.}}{\sim} \mathrm{Unif}(R_3)$. Therefore, the distances in Equation \eqref{eq:firstdecomposition} may be decomposed further:
\begin{align}
	d(\sigma_n, \tau_m) = d(\sigma_n, \Pi_{R_3}(\sigma_n))& \nonumber \\+ d(\Pi_{R_3}(\sigma_n), \Pi_{R_1}(\tau_m)) + d(\Pi_{R_1}(\tau_m), \tau_m) \, , \nonumber \\
	d(\widetilde{\sigma_n}, \tau_{m^\prime}) = d(\widetilde{\sigma}_n, \Pi_{R_3}(\widetilde{\sigma}_n))& \nonumber\\ + d(\Pi_{R_3}(\widetilde{\sigma}_n), \Pi_{R_1}(\tau_{m^\prime})) \nonumber \\+ d(\Pi_{R_1}(\tau_{m^\prime}), \tau_{m^\prime})\label{eq:seconddecomposition} \, .
\end{align}
We now consider each term, and argue as to whether the distribution is different in the antithetic and i.i.d. cases, recalling that in the i.i.d. case, $\widetilde{\sigma}_n$ is drawn from $R_1$ independently from $\sigma_n$, whilst in the antithetic case, $\widetilde{\sigma}_n = A_{R_1}(\sigma_n)$.

\begin{itemize}
	\item Each of the terms $d(\Pi_{R_1}(\tau_{m}), \tau_{m})$ and 
	
	\noindent $d(\Pi_{R_1}(\tau_{m^\prime}), \tau_{m^\prime})$ have the same distribution under the i.i.d. case and antithetic case. Further, in both cases, $d(\Pi_{R_1}(\tau_{m}), \tau_{m})$ is independent of $\Pi_{R_1}(\tau_m)$, and $d(\Pi_{R_1}(\tau_{m^\prime}), \tau_{m^\prime})$ is independent of $\Pi_{R_1}(\tau_{m^\prime})$, so these two terms are independent of all others appearing in the sum in both cases.
	\item Each of the terms $d(\Pi_{R_3}(\sigma_n), \Pi_{R_1}(\tau_m))$ and
	
	\noindent $d(\Pi_{R_3}(\widetilde{\sigma}_n), \Pi_{R_1}(\tau_{m^\prime}))$ have the same distribution under the i.i.d. case and the antithetic case, and are independent of all other terms in both cases.
	\item We deal with the terms $d(\sigma_n, \Pi_{R_3}(\sigma_n))$ and
	
	\noindent $d(\widetilde{\sigma}_n, \Pi_{R_3}(\widetilde{\sigma}_n))$ using Lemma \ref{lem:projAndAnti}. More specifically, under the i.i.d. case, these two distances are clearly i.i.d.. However, under the antithetic case, the lemma tells us that the sum of these two distances is equal to the mean under the distribution of the i.i.d. case almost surely. Thus, in the antithetic case, this random variable has the same mean as in the i.i.d. case, but is more concentrated (strictly so iff $d(\sigma_n, \Pi_{R_3}(\sigma_n))$ is not a constant almost surely, which is the case iff $R_1 \not= R_3$).
\end{itemize}

Thus, $d(\sigma_n, \tau_m) + d(\widetilde{\sigma_n}, \tau_{m^\prime})$ has the same mean under the i.i.d. and antithetic cases, but is strictly more concentrated when $R_1 \not= R_3$ This holds true iff the partial rankings $R_1$ and $R_2$ do not concern exactly the same set of objects. Thus, by a conditional version of Jensen's inequality, since $\exp(-\lambda x )$ is strictly convex as a a function of $x$, we obtain the variance result.

\end{proof}
\subsection{Antithetic kernel estimator and kernel herding}

In this section, having established the variance-reduction properties of antithetic samples in the context of Monte Carlo kernel estimation, we now explore connections to kernel herding \citep{CheWelSmo10}.

\begin{theorem}\label{thm:herding} The antithetic variate construction of Theorem~\ref{anti} is equivalent to the optimal solution for the first two steps of a  kernel herding procedure in the space of permutations.
\end{theorem}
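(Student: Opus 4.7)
My plan is to show that the kernel herding recursion, when initialised and applied to the target embedding $\widetilde{\boldsymbol{\Phi}}(R) = \tfrac{1}{|R|}\sum_{\sigma \in R} \boldsymbol{\Phi}(\sigma)$ with the support constrained to a top-$k$ partial ranking $R$, produces in its first two iterations exactly the antithetic pair $(\sigma_1, A_R(\sigma_1))$ from Definition~\ref{def:antithetic}. Recall the herding update: with $w_1 = \widetilde{\boldsymbol{\Phi}}(R)$ and $w_{t+1} = w_t + \widetilde{\boldsymbol{\Phi}}(R) - \boldsymbol{\Phi}(\sigma_t)$, one selects $\sigma_t = \arg\max_{\sigma \in R} \langle w_t, \boldsymbol{\Phi}(\sigma)\rangle$. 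The work is then entirely about unpacking these two argmaxes for kernels of the Mallows form $K(\sigma,\sigma') = \exp(-\lambda d(\sigma,\sigma'))$, where $d$ is the right-invariant Kendall distance.

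The first step is to show that the herding objective at $t=1$ is constant on $R$. Writing it out,
\begin{align*}
\langle \widetilde{\boldsymbol{\Phi}}(R), \boldsymbol{\Phi}(\sigma)\rangle = \frac{1}{|R|}\sum_{\sigma' \in R} K(\sigma,\sigma') \, .
\end{align*}
By right-invariance of $d$ and hence of $K$, together with the fact that $R$ is a top-$k$ ranking and hence closed under right-multiplication by permutations fixing $\{a_1,\ldots,a_k\}$, this sum is independent of the choice of $\sigma \in R$. Thus any $\sigma_1 \in R$ is an optimal first herding selection, and in particular we may take $\sigma_1$ to be the (arbitrary) base sample of the antithetic construction.

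For $t=2$, substituting $w_2 = 2\widetilde{\boldsymbol{\Phi}}(R) - \boldsymbol{\Phi}(\sigma_1)$ gives the objective
\begin{align*}
\sigma_2 \;=\; \arg\max_{\sigma \in R}\Bigl[\,2\langle \widetilde{\boldsymbol{\Phi}}(R), \boldsymbol{\Phi}(\sigma)\rangle - K(\sigma,\sigma_1)\Bigr].
\end{align*}
By the constancy established in step one, the first term contributes nothing to the argmax, so $\sigma_2 = \arg\min_{\sigma \in R} K(\sigma,\sigma_1)$. Since $K(\sigma,\sigma_1) = \exp(-\lambda d(\sigma,\sigma_1))$ is strictly monotone decreasing in $d$, this is equivalent to $\sigma_2 = \arg\max_{\sigma \in R} d(\sigma,\sigma_1)$, which by Definition~\ref{def:antithetic} and the explicit well-definedness established in Remark~\ref{rem:topkantithetic} is exactly $A_R(\sigma_1)$.

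The main obstacle is the constancy-on-$R$ claim in step one, since without it the herding argmax at $t=2$ would depend on a second additive term and would not in general reduce to $A_R(\sigma_1)$. The crucial ingredient is that $R$ admits a transitive action by the subgroup of $S_n$ fixing the top-$k$ elements, combined with right-invariance of $K$; this forces $\sum_{\sigma' \in R}K(\sigma,\sigma')$ to be a class function on $R$ that is actually constant. Once that symmetry argument is in place, the remainder is the direct algebraic identification of the second herding iterate with the antithetic map, which completes the proof.
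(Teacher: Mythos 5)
Your proof is correct and follows essentially the same route as the paper's: both expand the two-step herding objective, observe that $\frac{1}{|R|}\sum_{\sigma'\in R}K(\sigma,\sigma')$ is constant over $\sigma\in R$ so that the first sample is arbitrary and the second selection reduces to minimising $K(\sigma_1,\sigma_2)=\exp(-\lambda d(\sigma_1,\sigma_2))$, i.e.\ to maximising the Kendall distance, which is the antithetic map of Definition~\ref{def:antithetic}. The only substantive difference is that you justify the constancy explicitly via right-invariance together with the transitive right action on $R$ of the subgroup $\{\eta:\eta(i)=i,\ i=1,\ldots,k\}$ (note this subgroup fixes the first $k$ \emph{positions}, not the items $a_1,\ldots,a_k$ as written), whereas the paper simply asserts it from uniformity of $p(\cdot\mid R)$; your version thus closes a small gap that the published argument glosses over.
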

\begin{proof}
	Let $R$ be a partial ranking of $n$ elements. We calculate the sequence of herding samples from the uniform distribution $p(\cdot | R)$ over full rankings consistent with $R$ associated with the exponential semimetric kernel $K(\sigma, \sigma^\prime) = \exp(-\lambda d(\sigma, \sigma^\prime))$, for a metric $d$ of negative definite type.
	Following \cite{CheWelSmo10}, we note that the herding samples from $p(\cdot | R)$ associated with the kernel $K$, with RKHS embedding $\phi : S_n \rightarrow \mathcal{H}$, are defined iteratively by
	\[
	\sigma_T = \arg \min_{\sigma_T} \left\|\mu_p - \frac{1}{T} \sum_{t=1}^T \phi(\sigma_t) \rbrack \right\|_{\mathcal{H}}^2 \text{\ \ for\ } T=1,\ldots \, ,
	\]
	where $\mu_p$ is the RKHS mean embedding of the distribution $p$. Since $p$ is uniform over its support, any ranking $\sigma$ in the support of $p(\cdot | R)$ is a valid choice as the first sample in a herding sequence. Given such an initial sample, we then calculate the second herding sample, by considering the herding objective as follows:
	\begin{align}
	\left\|\mu_p - \frac{1}{2} \sum_{t=1}^2 \phi(\sigma_t) \rbrack \right\|_{\mathcal{H}}^2 
	& = \| \mu_p \|_\mathcal{H}^2 - \sum_{t=1}^2 \frac{1}{|R|} \sum_{\sigma \in R} K(\sigma_t, \sigma) \nonumber \\
	 &+ \frac{1}{4}\bigl( K(\sigma_1, \sigma_1) + 2K(\sigma_1, \sigma_2) \nonumber \\
	 &+ K(\sigma_2, \sigma_2) \bigr)  
	\end{align}
	which as a function of $\sigma_2$, is equal to $2K(\sigma_1, \sigma_2) = 2\exp(-\lambda d(\sigma_1, \sigma_2))$, up to an additive constant. Thus, selecting $\sigma_2$ to minimize the herding objective is equivalent to maximizing $d(\sigma_1, \sigma_2)$, which is exactly the definition of the antithetic sample to $\sigma_1$.
\end{proof}

After this result, one would like to do a herding procedure for more than two steps. However, the solution is not the same as picking $k$ herding samples simultaneously. Specifically, the following counterexample, illustrated in Figure \ref{fig:counterexample}, clearly shows why. The left plot shows the result of solving the herding objective for 2 samples -- the result is an antithetic pair of samples for the region $R$. If a third sample is selected greedily, with these first two samples fixed, it will yield a different result than if the herding objective is solved for $3$ samples simultaneously, as illustrated on the right of the figure.

\begin{figure}[h]
\centering
\includegraphics[keepaspectratio, width=0.4\textwidth]{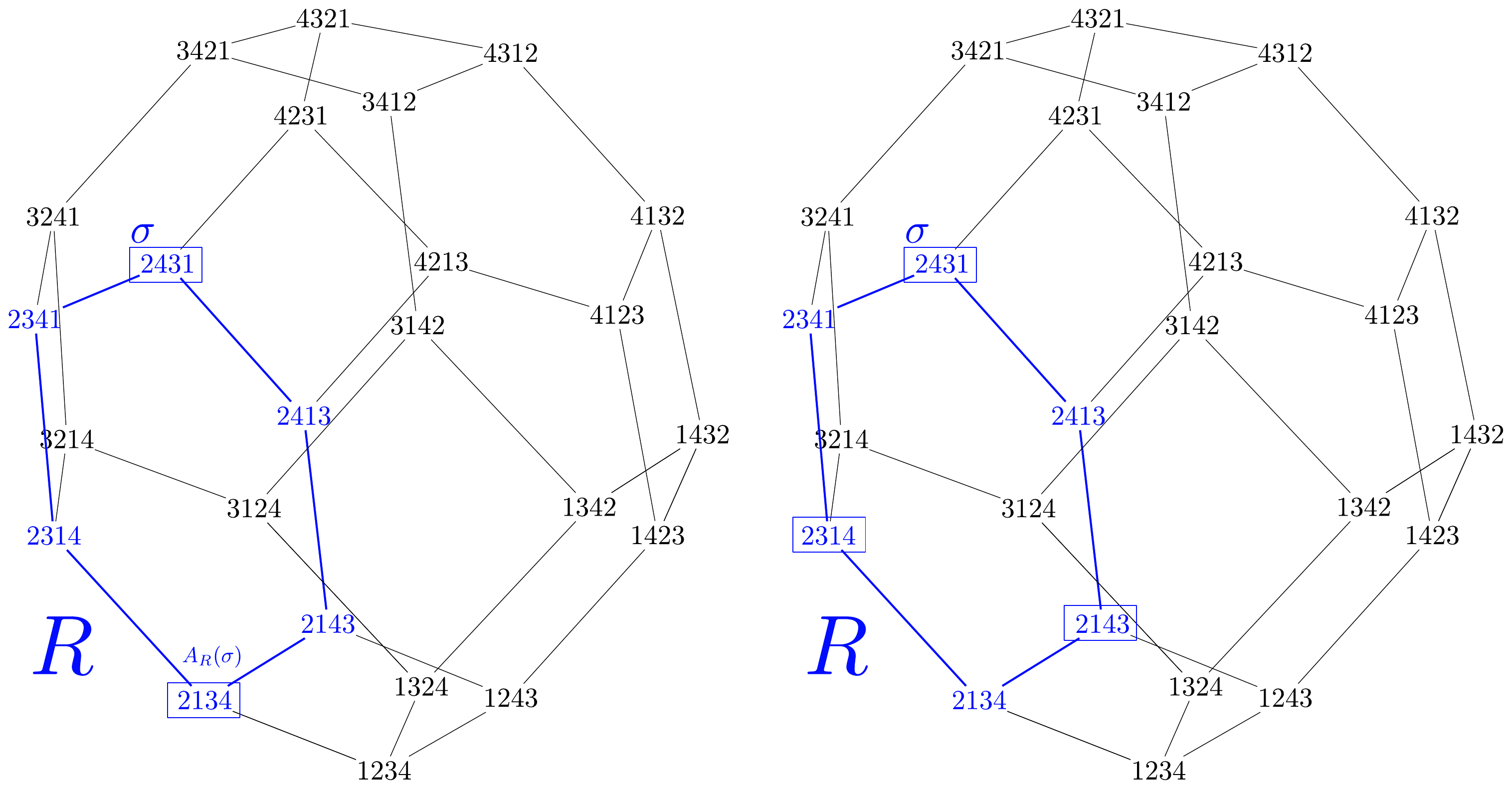}\caption{Samples from the region $R$, illustrating the difference between solving the herding objective greedily, and solving for all samples simultaneously.}
\label{fig:counterexample}
\end{figure}

\begin{rmk}

Theorem~\ref{thm:herding} says that if we first pick a point uniformly at random from $R$, then put it into the herding objective and then select the second deterministically to minimise the herding objective this is equivalent to the antithetic variate construction of Definition~\ref{def:antithetic}. Alternatively, we could pick the second point uniformly at random from $R$, independently from the first point. This second scheme will produce a higher value of the herding objective on average.

\end{rmk}

Once we have constructed two estimators for Kernel matrices we present some experiments to asses their performance in the next section.

\section{Experiments}

In this section, we use the Monte Carlo and antithetic kernel estimators for a variety of machine learning unsupervised and supervised learning tasks: a nonparametric hypothesis test, an agglomerative clustering algorithm and a Gaussian process classifier. 

Definition~\ref{def:antithetic} states the antithetic permutation construction with respect to a given permutation for Kendall's distance. In order to consider partial rankings data, we should respect the observed preferences when obtaining the antithetic variate. The pseudocode from Algorithm~\ref{alg:CompleteRankingUniformCoupled} corresponds to the algorithmic description for sampling an antithetic permutation and simultaneously respecting the constraints imposed by the observed partial ranking. Namely, the antithetic permutation has the observed preferences fixed in the same locations as the original permutation and only reverses the unobserved locations. This corresponds to maximising the Kendall distance between the permutation pair while respecting the constraints and ensures that both permutations have the right marginals as stated in Remark~\ref{rem:topkantithetic} and Lemma~\ref{lem:antithetic-marginal}.
\begin{algorithm}[H]
	\caption{SampleAntitheticConsistentFullRankings}
	\label{alg:CompleteRankingUniformCoupled}
	\begin{algorithmic}
		\STATE {\bfseries Input:} top$-k$ partial ranking $i_1 \succ i_2 \succ \cdots \succ i_k$, degree $n$ 
		\STATE {\bfseries Returns:} two full rankings $\sigma_1, \sigma_2$ consistent with the given partial ranking
		\STATE Set $\sigma_1(l) = \sigma_2(l) = i_l$ for $l=1,\ldots,k$
		\STATE Obtain a random ordering $j_1,\ldots,j_{n-k}$ of the remaining items $\{1,\ldots,n\} \setminus \{ i_1,\ldots,i_k\}$
		\STATE Let $b_1 < \cdots <b_{n-k}$ be the ordering of $\{1,\ldots,n\} \setminus \{i_1,\ldots,i_k\}$
		\STATE Set $\sigma_1(b_l) = j_l$ for $l=1, \ldots, n-k$
		\STATE Set $\sigma_2(b_l) = j_{n-k-l+1}$ for $l=1,\ldots,n-k$
		\STATE Return $\sigma_1, \sigma_2$
	\end{algorithmic}
\end{algorithm}
\subsection{Datasets\label{datasetsdesc}}

\textbf{Synthetic data set.} The synthetic dataset for the in the nonparametric hypothesis test experiment, where the null hypothesis is $H_0:P=Q$ and the alternative is $H_1:P\neq Q$, is the following: the dataset from the $P$ distribution is a mixture of Mallows distributions \citep{Dia88} with the Kendall and Hamming distances.  The central permutations are given by the identity permutation and the reverse of the identity respectively, with lengthscale equal to one. The dataset from the $Q$ distribution is a sample from the uniform distribution over $S_n$, where $n = 6$.

\textbf{Sushi dataset.} This dataset contains rankings about sushi preferences given by 5000 users~\citep{KamAka09}. The users ranked 10 types of sushi and the labels correspond to the user's region (East Japan or West Japan for the Gaussian process classifier and ten different regions for the agglomerative clustering task).

\subsection{Agglomerative clustering}

In this experiment, we used both the full and a censored version of the sushi dataset from Section~\ref{datasetsdesc}. We used various distances for permutations to compute the estimators for the semimetric matrix between pairs of partial rankings subsets.  In order to compute our estimators, we censored the dataset by storing the $topk=4$ partial rankings per user. The Monte Carlo and antithetic kernel estimators were used to obtain negative type semimetric matrices using the relationship from Equation~\eqref{def:semimetneg} in the following way: 
\begin{align}
\widehat{D(R,R')^2}&=\widehat{K}(R,R)+\widehat{K}(R',R')-2\widehat{K}(R,R')\nonumber.
\end{align}
This matrices were then used as an input to the average linkage agglomerative clustering algorithm \citep{DudHar73}. The tree purity measure is reported, it provides way to asses the tree produced by the agglomerative clustering algorithm. It can be computed in the following way: when a dendrogram and all correct labels are given, pick uniformly at random two leaves which have the same label c and find the smallest subtree containing the two leaves. The dendrogram purity is the expected value of $\frac {\#\text{leaves with label c in subtree}}{\#\text{leaves in the subtree}}$ per class. If all leaves in the class are contained in a pure subtree, the dendrogram purity is one. Hence, values close to one correspond to high quality trees. 
\vspace{-6mm}
\begin{table}[H]
       \scalebox{0.75}{
        \setlength\tabcolsep{2pt}
\begin{tabular}{| l | c | c | c | c | c|}
\hline
&&&Semiexp&Semiexp&Semiexp\\
&Kendall&Mallows&Hamming&Cayley&Spearman\\
\hline
$K$ Average&0.83&0.75&0.81&0.72&0.81\\
\hline 
$\widehat{K}$ Average&0.78(0.052)&0.79(0.058)&0.79(0.063)&0.82(0.040)&0.78(0.062)\\
$\widehat{K}^{a}$ Average&NA&0.77(0.050)&NA&NA&NA\\
\hline
\end{tabular}}\caption{Tree purities for the sushi dataset using a subsample of 100 users with the full Gram matrix $K$, a censored dataset of $topk=4$ partial rankings for the vanilla Monte Carlo estimator $\widehat{K}$ and the antithetic Monte Carlo estimator $\widehat{K}^{a}$, with $n_{mc}=20$ Monte Carlo samples. Tree cut at $k=10$ clusters. The median distance criterion was used to select the inverse of the lengthscale for the semimetric exponential kernels.
\label{tablenum}}
\end{table}

In Table~\ref{tablenum}, the true and estimated purities using the full rankings and the partial rankings datasets are reported. We assumed that the true labels are given by the user's region, there are ten different possible regions.  The true purity corresponds to an agglomerative clustering algorithm using the Gram matrix obtained from the full rankings. We can compute the Gram matrix for the full rankings because we have access to all of the users' rankings over the ten different types of sushi. The antithetic Monte Carlo estimator outperforms the vanilla Monte Carlo estimator in terms of average purity since it is closer to the true purity. It also has a lower standard deviation when estimating the marginalised Mallows kernel.

\subsection{Nonparametric hypothesis test with MMD}
Let $P$ and $Q$ be probability distributions over $S_n$, the null hypothesis is $H_0: P = Q$ versus $H_1:P \neq Q$ using samples $\sigma_1,\hdots,\sigma_{n}\stackrel{\text{i.i.d.}}{\sim} P$ and $\sigma'_1,\hdots,\sigma'_{m}\stackrel{\text{i.i.d.}}{\sim} Q$. We can estimate a pseudometric between $P$ and $Q$ and reject $H_0$ if the observed value of the statistic is large. The following is an unbiased estimator of the $MMD^2$ \citep{GreBorMalSchoSmo12}
\begin{align}\label{mmd2}
\widehat{MMD^2}(P,Q)=\frac{1}{m(m-1)}\sum_{i=1}^m\sum_{j\neq i}^mK(\sigma_i,\sigma_j)\nonumber\\+\frac{1}{n(n-1)}\sum_{i=1}^n\sum_{j\neq i}^n K(\sigma'_i,\sigma'_j)\nonumber\\-\frac{2}{nm}\sum_{i=1}^m\sum_{j\neq i}^nK(\sigma_i,\sigma'_j).
\end{align}
This statistic depends on the chosen kernel as can be seen in Equation~\eqref{mmd2}. If the kernel is characteristic~\citep{SriFukLan11}, then the $MMD^2$ is a proper metric over probability distributions.
Analogously, we can compute an MMD squared estimator for partial rankings sets, such that $R_1,\hdots,R_{n}\stackrel{\text{i.i.d.}}{\sim} P$ and $R'_1,\hdots,R'_{m}\stackrel{\text{i.i.d.}}{\sim} Q$, in the following way
\begin{align}\label{mmdsquared}
\widehat{MMD^2}(P,Q)=\frac{1}{m(m-1)}\sum_{i=1}^m\sum_{j\neq i}^m\hat{K}(R_i,R_j)\nonumber\\+\frac{1}{n(n-1)}\sum_{i=1}^n\sum_{j\neq i}^n \hat{K}(R'_i,R'_j)\nonumber\\-\frac{2}{nm}\sum_{i=1}^m\sum_{j\neq i}^n\hat{K}(R_i,R'_j).
\end{align}

We used the synthetic datasets for $P$ and $Q$ described in Section~\ref{datasetsdesc} to asses the performance of the Monte Carlo and antithetic kernel estimators in a nonparametric hypothesis test. The datasets consist of rankings over $n=10$ objects and we censored them to obtain top$-k$ partial rankings with $k=3$. We then computed the MMD squared statistic for the samples using the samples from the two populations. Since the non-asymptotic distribution of the statistic from Equation~\eqref{mmdsquared}  is not known, we performed a permutation test~\citep{AlbJimMu07}  in order to estimate consistently the null distribution and compute the $p-$value. We did this repeatedly as we varied the number of observations for a fixed number of Monte Carlo samples to see the effect of the sample size in the p-value computations. Specifically, Figure~\ref{mmdfig} and Table~\ref{mmdtable} show how the p-value computed with the antithetic kernel estimator has lower variance as we vary the number of observations in our dataset. Both p-values converge to zero since the samples from both populations come from different distributions. In Table~\ref{mmdtable} we report the standard deviations of the estimated p-values. The p-value obtained with the antithetic kernel estimator has lower variance accross all sample sizes.

\begin{figure}
\includegraphics[scale=0.5]{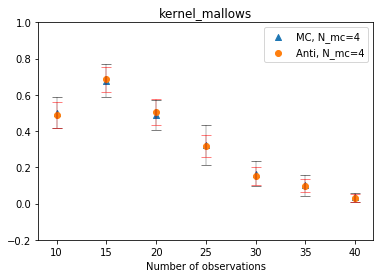}\caption{Mean p-values ($y-$axis) v.s. number of datapoints in synthetic dataset ($x-$axis)\label{mmdfig}}
\end{figure}

\begin{table}[H]
       \scalebox{0.75}{
        \setlength\tabcolsep{2pt}
\begin{tabular}{| c | c | c | c | c| c | c|c|}
\hline
\# obs &10&15&20&25&30&35&40\\
\hline
Monte Carlo &0.0853&  0.0910&0.0830&  0.1109&  0.0677&  0.0596&  0.0236\\
\hline
Antithetic & \textbf{0.0706}&\textbf{0.0663}&\textbf{0.0712}&\textbf{0.0594}&\textbf{0.0502}&\textbf{0.0363}&\textbf{0.0222}\\
\hline
\end{tabular}}\caption{
Standard deviations for $p$-values computed with the Monte Carlo and antithetic estimators \label{mmdtable}}
\end{table}

\subsection{Gaussian process classifier}

In this experiment, two different kernels were used to compute the estimators for the Gram matrix between different pairs of partial rankings subsets. The matrix was then provided as the input to a Gaussian process classifier~\citep{Nea98}. The Python library~\citet{gpy2014} was extended with custom kernel classes for partial rankings which compute both the Monte Carlo and antithetic kernel estimators for partial rankings subsets. Previously, it was only possible to do pointwise evaluations of kernels but our implementation allows to compute the kernels over pairs of partial ranking subsets by storing the sets in a tensor first.

For the Mallows kernel, we used the median distance heuristic~\citep{TakLeSeSmo06, SchSmo02} with the Kendall distance to compute the bandwidth parameter and a scale parameter of 9.5. We performed a grid search over different values of the scale parameter and picked the one that had the largest classification accuracy for the test set. 

\begin{table}[h]\scalebox{0.8}{

\begin{tabular}{| c | c | c | c | }
\hline
&Test accuracy&Train ave-loglik &Test ave-loglik\\
\hline
Mallows&$n_{obs}=100$&&\\
\hline
Full model&0.9&-0.2070&-0.5457\\
MC&0.74(0.016)&-0.2486(0.005)&-0.563(0.020)\\
Antithetic&0.75 (0)&-0.262(0.001)&-0.573(0.002)\\
\hline
Gaussian &$n_{obs}=50$&&\\
\hline
Full model&0.75&-0.2215&-0.7014\\
MC&0.72(0.048)&-0.2890(0.0245)&-0.5737(0.043)\\
Antithetic&NA&NA&NA\\
\hline
Kendall&$n_{obs}=100$&&\\
\hline
Full model&0.7&-0.311(3.01$\times10^{-6}$)&-0.597(3.5$\times10^{-6}$)\\
MC&0.66(0.037)&-0.3575(0.008)&-0.7063(0.052)\\
Antithetic&NA&NA&NA\\
\hline
\end{tabular}}\caption{Averaged over 10 runs with 4 Monte Carlo samples per run, using the inverse of the median distance as the lengthscale, $n=10,topk=6$.\label{tablegp} }
\end{table}
 In Table \ref{tablegp}, the results of running the Gaussian process classifier are reported using the marginalised Mallows kernel, the marginalised Gaussian kernel and the marginalised Kendall kernel as well as the corresponding estimators. Since the Mallows kernel is based on the Kendall distance, it is a kernel specifically tailored for permutations and it is the best in terms of predictive performance. The Gaussian kernel is a kernel that is suitable for Euclidean spaces and it does not take into account the data type but it still does well. The Kendall kernel does take into account the data type but it performs the worst. The full model corresponds to using the Gram matrix using the full rankings and MC and Antithetic refer to using our proposed estimators. We see how empirically the predictive accuracy obtained with the antithetic kernel estimator has lower variance as expected.

\section{Conclusion}
We addressed the problem of extending kernels to partial rankings by introducing a novel Monte Carlo kernel estimator and explored variance reduction strategies via an antithetic variates construction. Our schemes lead to a computationally tractable alternative to previous approaches for partial rankings data. The Monte Carlo scheme can be used to obtain an estimator of the marginalised kernel with any of the kernels reviewed herein. The antithetic construction provides an improved version of the kernel estimator for the marginalised Mallows kernel. Our contribution is noteworthy because the computation of most of the marginalised kernels grows super-exponentially with respect to the number of elements in the collection, hence, it quickly becomes intractable for relatively small values of the number of ranked items $n$. An exception is the fast approach for computing the convolution kernel proposed by~\citet{JiaoVert15}, which is only valid for Kendall kernel. \citet{Mania16} have shown that the Kendall kernel is not characteristic using non-commutative Fourier analysis to show that it has a degenerate spectrum. For this reason, using other kernels for permutations might be desirable depending on the task at hand.

 One possible direction for future work includes the use of explicit feature representations for traditional random features schemes to further reduce the computational cost of the Gram matrix. Another possible application is to use our method with pairwise preference data where users are not necessarily consistent about their preferences. In this type of data, we could still extract a partial ranking from a given user, then, sample from the space of the corresponding full rankings consistent with this observed partial ranking and obtain our Monte Carlo kernel estimator. This would benefit from our framework because having a partial ranking is in general more informative that having pairwise comparisons or star ratings.

Another natural direction for future work is to develop variance-reduction sampling techniques for a wider variety of kernels over permutations, and to the extend the theoretical analysis of these constructions to discrete graphs more generally.


\subsubsection*{Acknowledgements}
Many thanks to Ryan Adams for insightful discussions. Maria Lomeli and Zoubin Ghahramani acknowledge support from the Alan Turing Institute (EPSRC Grant EP/N510129/1), EPSRC Grant EP/N014162/1, and donations from Google and Microsoft Research.  Arthur Gretton thanks the Gatsby Charitable Foundation for financial support. Mark Rowland
acknowledges support by EPSRC grant EP/L016516/1 for the Cambridge Centre for Analysis.


\bibliographystyle{authordate1}

\bibliography{roughbib}
\newpage
\onecolumn

\appendix

\section{Reproducing kernel Hilbert spaces \label{app:rkhs}}
A reproducing kernel Hilbert space (RKHS) \citep{BerAgn04} over a set $\mathcal{X}$ is a Hilbert space $\mathcal{H}$ consisting of functions on $\mathcal{X}$ such that for each $x\in\mathcal{X}$ there is a function $k_x\in \mathcal{H}$ with the property
\begin{align}
\langle f,k_x \rangle_{\mathcal{H}}&=f(x), \hspace{4mm}\forall f\in \mathcal{H}.
\end{align}
The function $k_x(\cdot)=k(x,\cdot)$ is called the \emph{reproducing kernel} of $\mathcal{H}$ \citep{Aron50}. The space $\mathcal{H}$ is endowed with an inner product $\langle\cdot{,}\cdot\rangle_{\mathcal{H}}$ and a norm can be defined based on it such that $\|f\|_{\mathcal{H}}:=\sqrt{\langle\cdot{,}\cdot\rangle_{\mathcal{H}}}$. In order to be a Hilbert space, it needs to contain all limits of Cauchy sequences, i.e. it has to be complete. In the case of the symmetric group of degree $n$, $\mathcal{X}=S_n$, the space is finite dimensional which guarantees that it is complete. Finally, any symmetric and positive definite function $k_x:\mathcal{X}\times \mathcal{X}\rightarrow\mathbb{R}$ uniquely determines an RKHS. Alternatively, a function $k:\mathcal{X}\times\mathcal{X}\rightarrow\mathbb{R}$ is called a \emph{kernel} if there exists a Hilbert space $\mathcal{H}$ and a map $\phi:\mathcal{X}\rightarrow \mathcal{H}$ such that $\forall x,y \in \mathcal{X}$, $k(x,y)=\langle \phi(x),\phi(y) \rangle_{\mathcal{H}}$. The function $\phi$ is usually referred to as the \emph{feature representation} of $x$. Even though the RKHS induced by the kernel is unique, there can be more than one feature representations that define the same kernel.

\section{Expectation of the Kernel Monte Carlo estimator \label{app:unbiasednessproof}}
\begin{proof}
 For distinct $i,j =1,\ldots, I$, let $\left\{\sigma_n^{(i)}\right\}_{n=1}^{N_i}$ be an independent and identically distributed (i.i.d.) sample from $p(\sigma\mid R_i)$ and $\left\{\sigma_m^{(j)}\right\}_{m=1}^{N_j}$ be an i.i.d. sample from $p(\sigma\mid R_j)$. 

If the weights are uniform,
\begin{align}
\mathbb{E}\left(\widehat{K}(R_i,R_j)\right)&=\frac{1}{N_iN_j}\sum_{n=1}^{N_i}\sum_{m=1}^{N_j}\ \mathbb{E}\left(K(\sigma_n^{(i)},\sigma_m^{(j)})\right)\nonumber \\
\end{align}
By linearity of expectation, since the samples are identically distributed, the expectation in the summand above reduces to
\begin{align}
=\sum_{\sigma\in R_{i}}\sum_{\sigma'\in R_{j}}K(\sigma,\sigma')p(\sigma\mid R_i)p(\sigma'\mid R_j)\nonumber
\end{align}
as required. The diagonal case,
\begin{align}
\mathbb{E}\left(\widehat{K}(R_i,R_i)\right)&=\frac{1}{N_i^2}\left( \sum_{n=1}^{N_i}\sum_{m=1}^{N_i}\mathbb{E}\left(K(\sigma_n^{(i)},\sigma_m^{(i)})\right)\right)\nonumber\\
&=\frac{1}{N_i^2}\sum_{n=1}^{N_i}\mathbb{E}\left[ \sum_{m\neq n}^{N_i}\mathbb{E}\left(K(\sigma_n^{(i)},\sigma_m^{(i)}\mid \sigma_n^{(i)})\right)+\mathbb{E}(K(\sigma_n^{(i)},\sigma_n^{(i)}\mid \sigma_n^{(i)})\right]\nonumber\\
&=\frac{1}{N_i^2}\sum_{n=1}^{N_i}\mathbb{E}\left[ (N_i-1)\mathbb{E}\left(K(\sigma^{(i)},\sigma^{'(i)}\mid \sigma_n^{(i)})\right)+\mathbb{E}\left(K(\sigma^{'(i)},\sigma^{'(i)})\right)\right]\nonumber\\
&=\frac{(N_i-1)}{N_i}\mathbb{E}_{\sigma,\sigma'}\left(K(\sigma^{(i)},\sigma^{'(i)})\right)+\frac{1}{N_i}\mathbb{E}_{\sigma'}\left(K(\sigma^{'(i)},\sigma^{'(i)})\right)\nonumber.
\end{align}

If the weights are non-uniform and are given by the importance sampling weights, namely $w^{(i)}=\frac{p(\sigma\mid R_i)}{q(\sigma\mid R_i)}$, and the expectation is taken with respect to the proposal $q$, then
\begin{align}
\mathbb{E}_q\left(\widehat{K}(R_i,R_j)\right)&=\frac{1}{N_iN_j}\sum_{n=1}^{N_i}\sum_{m=1}^{N_j}\ \mathbb{E}_q\left(\frac{p(\sigma_n^{(i)}\mid R_i)}{q(\sigma_n^{(i)}\mid R_i)}\frac{p(\sigma_m^{(j)}\mid R_i)}{q(\sigma_m^{(j)}\mid R_i)}K(\sigma_n^{(i)},\sigma_m^{(j)})\right)\nonumber 
\end{align}
By linearity of expectation, since the samples are identically distributed, the expectation in the summand above reduces to
\begin{align}
=\sum_{\sigma\in R_{i}}\sum_{\sigma'\in R_{j}}K(\sigma,\sigma')p(\sigma\mid R_i)p(\sigma'\mid R_j)\nonumber
\end{align}
as required. 
\end{proof}
\section{Variance of Kernel Monte Carlo estimator with i.i.d. samples\label{app:varMCK}}
\begin{proof} {The variance of the Kernel Monte Carlo estimator with uniform weights is the following:}
\begin{align*}
	\displaystyle \mathrm{Var}\left[\widehat{K}(R_i,R_j)\right]&=\frac{1}{N_i^2N_j^2}\mathrm{Var}\left[\sum_{n=1}^{N_i} \sum_{m=1}^{N_j} K(\sigma_n^{(i)},\sigma_m^{(j)})\right]\nonumber\\
	&=\frac{1}{N_i N_j^2}\left[ \mathrm{Var}\left( \sum_{m=1}^{N_j} K(\sigma_1^{(i)},\sigma_m^{(j)})\right)\right]
	\end{align*}

If we use the law of total variance, then
\begin{align*}
	 \mathrm{Var}\left( \sum_{m=1}^{N_j} K(\sigma_1^{(i)},\sigma_m^{(j)})\right)&= \mathrm{Var}\left(\mathbb{E}\biggr[\sum_{m=1}^{N_j} K(\sigma_1^{(i)},\sigma_m^{(j)}) \mid \sigma_1^{(i)}\biggl]\right)
	 + \mathbb{E}\left( \mathrm{Var}\biggl[\sum_{m=1}^{N_j} K(\sigma_1^{(i)},\sigma_m^{(j)}) \mid \sigma_1^{(i)}\biggr]\right)\nonumber\\
	 &= {N_j}^2\mathrm{Var}\left(\mathbb{E}\biggl[ K(\sigma_1^{(i)},\sigma_1^{(j)}) \big| \sigma_1^{(i)}\biggr]\right)+N_j\mathbb{E}\left( \mathrm{Var}\biggr[ K(\sigma_1^{(i)},\sigma_1^{(j)}) \big| \sigma_1^{(i)}\biggl]\right)\nonumber\\
	 &=N_j^2\mathrm{Var}\left(\sum_{\sigma'\in R_j} K(\sigma_1^{(j)},\sigma')p(\sigma'\mid R_j) \right)\\
	 &+N_j\mathbb{E}\left(\sum_{\sigma'\in R_j} K(\sigma_1^{(i)},\sigma')^2p(\sigma'\mid R_j)\right)
	 	 \nonumber\\
		 &-N_j\mathbb{E}\left(\biggl(\sum_{\sigma'\in R_j} K(\sigma_1^{(i)},\sigma')p(\sigma'\mid R_j)\biggr)^2\right)
	 	 \nonumber\\
		 &=N_j^2\sum_{\sigma\in R_i}p(\sigma'\mid R_i)\biggl(\sum_{\sigma'\in R_j}p(\sigma'\mid R_j)K(\sigma,\sigma')\biggr)^2\nonumber\\
	&-N_j^2\biggl(\sum_{\sigma\in R_i}\sum_{\sigma'\in R_j}K(\sigma,\sigma')p(\sigma\mid R_i)p(\sigma'\mid R_j)\biggr) ^2\nonumber\\ 
	&+N_j\sum_{\sigma\in R_i}p(\sigma\mid R_i)\left(\sum_{\sigma'\in R_j}p(\sigma'\mid R_j)K(\sigma,\sigma')\right)^2\nonumber\\
	&-N_j\sum_{\sigma'\in R_j}\sum_{\sigma\in R_i} K(\sigma,\sigma')^2p(\sigma'\mid R_j)p(\sigma\mid R_i) \nonumber
\end{align*}

So the variance for the Monte Carlo kernel estimator is given by
\begin{align*}
	\displaystyle \mathrm{Var}\left[\widehat{K}(R_i,R_j)\right]&=\frac{1}{N_i}\biggl[\sum_{\sigma\in R_i}p(\sigma\mid R_i)\biggl(\sum_{\sigma'\in R_j}p(\sigma'\mid R_j)K(\sigma,\sigma')\biggr)^2\nonumber\\
	&-\biggl(\sum_{\sigma\in R_i}\sum_{\sigma'\in R_j}K(\sigma,\sigma')p(\sigma\mid R_i)p(\sigma'\mid R_j)\biggr) ^2\biggr]\nonumber\\
	+&\frac{1}{N_iN_j}\biggl[\sum_{\sigma\in R_i}p(\sigma\mid R_i)\left(\sum_{\sigma'\in R_j}p(\sigma'\mid R_j)K(\sigma,\sigma')\right)^2\nonumber\\
	&-\sum_{\sigma\in R_i}\sum_{\sigma'\in R_j}K(\sigma,\sigma')^2p(\sigma\mid R_i)p(\sigma'\mid R_j)\biggr].
\end{align*}

\end{proof}

\section{Factorial growth of the space of consistent full rankings for a given partial ranking\label{bignumbers}}

\medskip

\begin{table}[H]
						\vskip 0.15in
		\begin{center}
			\begin{small}
				\begin{sc}
					\begin{tabular}{lcccr}
						\hline
						$n/\#\text{pairs}$& 1& 2 &3&4  \\
						\hline
						3    &3 & 1&-& -  \\
						4    &12 &4 &1&-   \\
						5    & 60&20 &5& 1 \\
						6    &360 &120 &30& 6 \\
						7    &2520 & 840&210& 42 \\
						8    &20160 & 6720&1680& 336 \\
						9    & 181440& 60480&15120& 3024  \\
						10    & 1814400& 604800&151200& 30240  \\
						\hline
					\end{tabular}
				\end{sc}
			\end{small}
		\end{center}
		\caption{Table of partial rankings subset cardinalities for a given number of items and number of observed preferences (by overlapping pairs)\label{bignumbers-table}
}
		\vskip -0.1in
	\end{table}

\medskip

\end{document}